\documentclass[a4paper, 11pt]{article}
\usepackage[utf8]{inputenc}
\usepackage[T1]{fontenc}
\usepackage[american]{babel}
\usepackage{csquotes}
\usepackage{geometry}
\usepackage{graphicx}
\usepackage[dvipsnames]{xcolor}
\usepackage{hyperref}
\usepackage{xurl}
\usepackage{enumitem}
\usepackage{amsmath, amstext, amssymb, amsopn, amsthm}
\usepackage{mathtools}  
\usepackage{stmaryrd}
\usepackage{bbm}
\usepackage{braket}
\usepackage{cancel}
\usepackage[capitalize]{cleveref}
\usepackage[backend=biber, style=numeric, sorting=nyt]{biblatex}
\usepackage{authblk}

\renewbibmacro{in:}{}
\DeclareFieldFormat{doi}{
  \texttt{doi}\addcolon\space\ifhyperref
  {\href{https://doi.org/#1}{\nolinkurl{#1}}}
  {\nolinkurl{#1}}
}
\DeclareFieldFormat{url}{
  \texttt{url}\addcolon\space\ifhyperref
  {\url{#1}}{\nolinkurl{#1}}
}

\hypersetup{
  colorlinks=true,
  linkcolor=Red,
  citecolor=Green,
  urlcolor=Turquoise
}
\newcommand{\N}{\mathbb{N}}

\newcommand{\R}{\mathbb{R}}
\renewcommand{\S}{\mathbb{S}}
\renewcommand{\P}{\mathbb{P}}
\newcommand{\E}{\mathbb{E}}
\newcommand{\F}{\mathcal{F}}
\renewcommand{\L}{\mathcal{L}}
\newcommand{\X}{\mathcal{X}}
\newcommand{\Y}{\mathcal{Y}}
\newcommand{\W}{\mathsf{W}}
\newcommand{\SW}{\mathsf{SW}}
\renewcommand{\d}{\mathsf{d}}
\newcommand{\s}{\mathsf{s}}

\DeclareMathOperator{\Lip}{Lip}

\DeclareMathOperator*{\supp}{supp}
\DeclareMathOperator{\Var}{Var}

\newtheoremstyle{plain}{}{}{\itshape}{}{\bfseries}{.}{0.5em}{}
\newtheoremstyle{remark}{}{}{}{}{\itshape}{.}{0.5em}{}

\theoremstyle{plain}
\newtheorem{theorem}{Theorem}
\newtheorem{proposition}{Proposition}
\newtheorem{corollary}{Corollary}
\newtheorem{lemma}{Lemma}

\theoremstyle{remark}
\newtheorem*{remark}{Remark}

\title{Uniform-in-time concentration in two-layer neural networks via transportation inequalities}

\author[1]{Arnaud Guillin\thanks{\href{mailto:arnaud.guillin@uca.fr}{\texttt{arnaud.guillin@uca.fr}}}}
\author[1]{Boris Nectoux\thanks{\href{mailto:boris.nectoux@uca.fr}{\texttt{boris.nectoux@uca.fr}}}}
\author[1]{Paul Stos\thanks{\href{mailto:paul.stos@uca.fr}{\texttt{paul.stos@uca.fr}}}}
\affil[1]{\small\textsc{Université Clermont-Auvergne}, CNRS UMR 6620, \textit{LMBP}, Clermont-Ferrand, France}

\date{}

\begin{document}

\maketitle

\begin{abstract}
  We quantify, \emph{uniformly over time} and with high probability, the discrepancy between the predictions of a two-layer neural network trained by stochastic gradient descent (SGD) and their mean-field limit, for quadratic loss and ridge regularization. As a key ingredient, we establish $T_p$ transportation inequalities ($p \in \{1,2\}$) for the law of the SGD parameters, with explicit constants independent of the iteration index. We then prove uniform-in-time concentration of the empirical parameter measure around its mean-field limit in the Wasserstein distance $\W_1$, and we translate these bounds into prediction-error estimates against a fixed test function $\varphi$. We also derive analogous concentration bounds in the sliced-Wasserstein distance $\SW_1$, leading to dimension-free rates. 
\end{abstract}

\section{Introduction}

Neural networks have become a standard tool for a variety of high-dimensional prediction problems~\cite{DeepLearning,LCBH,Schmidhuber}, and their empirical success has fueled substantial interest in understanding the theoretical mechanisms underpinning their behavior~\cite{Suh-Cheng,SSBD,LTFP}. 

In this paper we focus on the classical framework of supervised learning for a regression task.
We observe a stream of independent data points $(x_k, y_k) \in \R^d \times \R$ drawn from a common (unknown) distribution $\pi$. The goal is to learn a predictor $\hat{y}$ of the label $y$ that generalizes to previously unseen features $x$. In a one-hidden-layer neural network of \emph{width} $N$, this predictor is parametrized by $\theta = (\theta^1, \dots, \theta^N) \in \mathcal{E} = (\R^D)^N$ and takes the form
\[
  \hat{y}_\theta(x) = \frac1N \sum_{i=1}^N \sigma_*(\theta^i, x),
\]
where $\sigma_* : \R^D \times \R^d \to \R$ is an \emph{activation} function. Typically, $\theta^i = (a_i, b_i, w_i) \in \R^{d+2}$ and $\sigma_*(\theta^i, x) = a_i \sigma(w_i \cdot x + b_i)$ for some $\sigma : \R \to \R$ (e.g., sigmoid, tanh, ReLU). The factor $1/N$ is the standard \emph{mean-field} normalization of the network output~\cite{Descours}; it is made for convenience and can be absorbed into the activation by simple redefinition.

In principle, one would like to choose $\theta$ so as to minimize the \emph{population} risk
\[
  \mathcal R(\theta) = \E_\pi\!\left[\ell\big(y, \hat y_\theta(x)\big)\right],
\]
where $\ell: \R \times \R \to \R$ is a \emph{loss} function. For regression, we use the square loss $\ell(y, \hat{y}) = \frac12(y - \hat{y})^2$ out of simplicity, though other losses could be considered. 

In practice, since we do not have direct access to the data distribution $\pi$, a standard approach consists in minimizing an \emph{empirical} proxy $\widehat{\mathcal{R}}$ built from the observed samples via an optimization algorithm such as stochastic gradient descent (SGD) or its variants~\cite{Bottou,BCN}.

In the present setting, parameters are initialized i.i.d.\ from $\mu_0 \in \mathcal{P}(\R^D)$ and updated according to \emph{online} SGD with fixed learning rate $\alpha > 0$: for every $i \in \{1, \dots, N\}$,
\begin{equation}\label{eq:SGD}
  \theta_{k+1}^i = 
  (1-\gamma\lambda)\theta_k^i 
  + \gamma\big(y_{k+1} - \hat{y}_{\theta_k}(x_{k+1})\big) 
  \nabla_{\theta^i}\sigma_*(\theta_k^i,x_{k+1}),
\end{equation}
where $\gamma = \alpha / N$. The term $-\gamma\lambda$ corresponds to a \emph{ridge} regularization of the empirical risk
\[
  \widehat{\mathcal{R}}_\lambda(\theta, x, y)
  = \frac{1}{2} \big(y - \hat{y}_\theta(x)\big)^2
  + \frac{\lambda}{2}\|\theta\|_2^2
\]
which will be useful for our analysis below.

In the \emph{online} setting, each data point is used only once and never revisited. This idealization is standard in the learning literature and is relevant whenever data are abundant or streaming, so that repeated reuse of a fixed finite dataset is not the leading phenomenon~\cite{Bottou,Saad-Solla}. 

Providing quantitative guarantees for the training dynamics of neural networks remains challenging, even in the basic two-layer architecture trained by stochastic gradient descent. This difficulty stems not only from high dimensionality, but also from the \emph{feedback} structure of learning: at each iteration, each parameter is updated through a gradient that depends on the current network output, while this output aggregates the contributions of all $N$ neurons and is evaluated on random data samples. As a result, SGD produces a strongly coupled stochastic evolution in parameter space $\mathcal{E}$. 

For wide two-layer networks, a particularly fruitful approach has been to model training as a large interacting particle system and to track it via the \emph{empirical distribution} of the neurons' parameters~\cite{RVE}, or its rescaled-in-time version~\cite{Descours,DGMN}, defined for all $t \ge 0$ by
\[
  \mu_t^N = \frac1N \sum_{i=1}^N \delta_{\theta^i_{\lfloor Nt \rfloor}} \in \mathcal{P}(\R^D).
\]
As the number of neurons $N \to \infty$, this empirical process converges in probability (in an appropriate topology) to a deterministic evolution $(\bar\mu_t)_{t \ge 0}$ characterized as the solution to a nonlinear measure-valued equation~\cite{MMN,SSa,DGMN} (see~\eqref{eq:MF}). This asymptotic description yields a more tractable macroscopic model for the learning dynamics, and has proved useful for studying fluctuations~\cite{SSb,DGMN} and for analyzing stability and long-time behavior~\cite{MMN,Chizat-Bach}.

Existing quantitative finite-width bounds are typically derived on a \emph{fixed} horizon $[0, T]$, with constants deteriorating (often exponentially) as $T$ grows; see, e.g.,~\cite[Theorem~3]{MMN}. This is unsatisfactory in practice, since learning algorithms are routinely run for many iterations. 

In this work, we address this gap by quantifying \emph{uniformly over} $t \ge 0$ the deviation of the empirical parameter measure $\mu_t^N$ from its mean-field limit $\bar\mu_t$, under an explicit \emph{contractivity} condition induced by ridge regularization. More precisely, we study three complementary levels of discrepancies:
\begin{itemize}
  \item $\Delta_t^{N,\varphi} = \big|\langle \varphi,\mu_t^N \rangle - \langle \varphi,\bar\mu_t \rangle\big|$ for a \emph{fixed} $1$-Lipschitz test function $\varphi : \R^D \to \R$. In particular, assuming Lipschitz activation (assumption~\ref{assump:A1} below), for any fixed input $x \in \R^d$, taking $\varphi = \sigma_*(\cdot, x)$ translates any bound on $\Delta_t^{N,\varphi}$ into a bound on the network output.

  \item $\Delta_t^N = \W_1(\mu_t^N, \bar\mu_t)$, where $\W_1$ is the classical $1$--\emph{Wasserstein} distance on $\mathcal{P}_1(\R^D)$ (the set of Borel probability measures on $\R^D$ with finite first moment). Recall the Kantorovich--Rubinstein duality formula for $\W_1$ (see, e.g.,~\cite[Remark 6.5]{Old-New}):
  \begin{equation}\label{eq:Kantorovich-Rubinstein}
    \W_1(\mu,\nu) = \sup_{\|\psi\|_{\Lip} \le 1}\big|\langle \psi,\mu\rangle-\langle \psi,\nu\rangle\big|.
  \end{equation}
  Hence $\Delta_t^N$ simultaneously controls $\Delta_t^{N,\varphi}$ for \emph{all} $1$-Lipschitz $\varphi$. 

  \item $\Delta_t^{N,\s} = \SW_1(\mu_t^N, \bar\mu_t)$, where $\SW_1$ is the $1$--\emph{sliced-Wasserstein} distance, defined by
  \[
    \SW_1(\mu, \nu)
    = \int_{\S^{D-1}} \W_1\big({P_u}_\#\mu, {P_u}_\#\nu\big)\,\zeta(du),
    \qquad P_u(w) = u \cdot w,
  \]
  with $\zeta$ the uniform probability measure on the unit sphere $\S^{D-1}$. The sliced analog $\SW_1$ is widely used in high dimension and is computationally tractable~\cite{BRPP,Peyre-Cuturi,NDJBS} as it reduces comparisons to one-dimensional projections, while still probing a rich family of observables (all linear projections). This makes $\Delta_t^{N,\s}$ a useful compromise between the very targeted discrepancy $\Delta_t^{N,\varphi}$ and the fully global metric $\Delta_t^N$.
\end{itemize}
When the particular choice is immaterial, we write 
\[
  \Delta_t^{N,\circ} \in \{\Delta_t^{N,\varphi}, \Delta_t^N, \Delta_t^{N,\s}\}.
\]
We refer to~\cite{Santambrogio,Old-New,Peyre-Cuturi} for background on Wasserstein distances and optimal transport (see also~\cite{OTML} for a more machine-learning-oriented introduction), and to~\cite{BRPP,NDCKSS,NDJBS} for sliced-Wasserstein and related sliced divergences.

Our analysis hinges on two steps. 
First, under regularity conditions on the initialization law $\mu_0$ and the data distribution $\pi$, we establish a $T_p$ \emph{transportation} inequality~\cite{Gozlan-Leonard} ($p \in \{1,2\}$) for the law $\nu_k = \L(\theta_k)$ of the parameter vector at iteration $k$, with explicit constants independent of $k$. This is the content of~\cref{prop:Tp}. For both $p=1$ and $p=2$, this implies sub-Gaussian tails~\eqref{eq:tailp} for centered Lipschitz observables of the parameters (hence of $\mu_t^N$), with constants that do not deteriorate with time. Instantiated with suitable Lipschitz functionals of $\mu_t^N$, this first step yields high-probability deviation bounds around the mean for our three discrepancies of interest (see \cref{cor:GC}). 
We then bound the remaining bias term $\E[\Delta_t^{N,\circ}]$ uniformly over $t \ge 0$ by comparing the interacting SGD particles to an i.i.d.\ mean-field system through a synchronous coupling. 
This yields a decomposition into a \emph{dynamic} error along trajectories, handled via a standard \emph{propagation-of-chaos} argument~\cite{Sznitmann} (\cref{prop:PoC,cor:PoC}), and a \emph{static} i.i.d.\ sampling error for empirical measures, controlled by classical empirical rates in Wasserstein distance~\cite{Fournier-Guillin} (\cref{prop:FG,prop:FG2}). 
Combining the two steps, we obtain a \emph{nonasymptotic}, \emph{uniform-in-time} concentration bound around the mean-field limit. This constitutes our main result, presented as~\cref{thm:concentration}. These results are established under standard boundedness and Lipschitz assumptions on the activation $\sigma_*$. 
The final section explains how to relax boundedness by allowing at most linear growth in the parameter variable on the support of the data, hence covering more general ReLU-type activations, via localization.

\paragraph{Related works.}
Mean-field approximations for the online training of wide two-layer neural networks were introduced in~\cite{MMN}, where the limiting dynamics is derived together with finite-width bounds on a fixed time horizon $[0, T]$. A trajectorial law of large numbers and central limit theorem for the empirical parameter measure on any fixed time interval were proved in~\cite{SSa,SSb}, and later extended at the level of the full trajectory (under mini-batching and noisy regimes) in~\cite{DGMN}. For complementary analyses of the mean-field dynamics, including stability and long-time behavior under various structural assumptions, we refer to~\cite{RVE,Chizat-Bach,MMN}. We also mention the quantitative propagation-of-chaos analysis for (continuous-time) SGD dynamics developed in~\cite{DBDFS}.

Beyond fixed-horizon bounds, obtaining \emph{time-uniform} control typically requires some form of \emph{dissipativity} (e.g., contractivity in a suitable metric) to prevent error accumulation; see, e.g.,~\cite{DEGZ,LLF,Delarue-Tse} in the setting of weakly interacting diffusions. Uniform-in-time exponential bounds for Wasserstein fluctuations have been established for related particle systems (including heterogeneous mean-field interactions) in~\cite{Bayraktar-Wu}, extending transport-based concentration estimates for empirical measures~\cite{BGV}; see also the general mean-field particle-model framework of~\cite{DMR}.

In a different direction, several works extend mean-field ideas to deep architectures and other infinite-width regimes. Rigorous mean-field formalisms for multilayer networks are developed in~\cite{SSdeep,Nguyen,Nguyen-Pham,FLYZ}. Mean-field viewpoints have also been proposed for transformer architectures, where token dynamics can be interpreted as interacting particle systems~\cite{Rigollet,CLPR}.

From a more geometric perspective, the classical i.i.d.\ Wasserstein sampling-error bounds of Fournier and Guillin~\cite{Fournier-Guillin} have recently been sharpened in~\cite{DFM}, where one obtains similar optimal rates with the ambient dimension replaced by the covering dimension of the support. This suggests a potential direction for future work in our setting: if the mean-field limit concentrates on a low-dimensional subset of parameter space, then identifying an effective support and estimating (or upper-bounding) its covering dimension could yield strictly improved rates.

\bigskip

During the preparation of this manuscript, we became aware of~\cite{Vuckovic}, which establishes quantitative propagation-of-chaos bounds for general McKean-type nonlinear Markov chains and discusses long-time behavior under additional structural assumptions. While the settings and techniques differ, both works support the viewpoint that uniform-in-time propagation of chaos is fundamentally tied to an underlying contractive structure.

\bigskip

The rest of this paper is organized as follows. Our standing assumptions and main results are stated in~\cref{sec:main-results}.~\cref{sec:proofs} contains the proofs. Finally,~\cref{sec:extension} presents an extension of the previous results to linearly growing activations.

\paragraph{Notation.}
All random variables are defined on a common probability space $(\Omega,\F,\P)$. For a random variable $Z$, $\L(Z)$ denotes the law of $Z$. We use $|\cdot|$ both for the absolute value on $\R$ and the Euclidean norm on $\R^d$ or $\R^D$. The scalar product on $\R^D$ is denoted by $v \cdot w$. We write $|\cdot|_\infty$ for the sup norm on $\mathcal{Z} = \R^d \times \R$, and $\|\cdot\|_p$ ($p \in \{1,2\}$) for the usual $\ell^p$ product norm on $\mathcal{E} = (\R^D)^N$:
\[
  \|\theta\|_p = \Big(\sum_{i=1}^N |\theta^i|^p\Big)^{1/p},\qquad
  \theta = (\theta^1, \dots, \theta^N) \in \mathcal{E}.
\]
Given a metric space $(\X,\d)$ and a function $f : \X \to \R$, we define (whenever finite)
\[
  \|f\|_\infty = \sup_{x \in \X} |f(x)|,\qquad
  \|f\|_{\Lip} = \sup_{x \ne y} \frac{|f(x) - f(y)|}{\d(x,y)}.
\] 
For measurable $\varphi : \R^D \to \R$ and $\mu \in \mathcal{P}(\R^D)$, we use the bracket notation
\[
  \langle \varphi,\mu \rangle = \int_{\R^D} \varphi\,d\mu.
\]
In particular, for any input $x \in \R^d$ and $t \ge 0$, we have
\[
  \hat{y}_{\theta_{\lfloor Nt \rfloor}}(x) = \langle \sigma_*(\cdot, x),\mu_t^N \rangle.
\]
Throughout this work, $C > 0$ denotes a generic constant (independent of $N$, $t$ and $p$) whose value may change from line to line.

\section{Main results}\label{sec:main-results}

\subsection{Assumptions} 

We define the following set of assumptions.

\begin{enumerate}[label={(A\arabic*)}, font=\bfseries]
  \item\label{assump:A1}
  The activation function $\sigma_* \in \mathcal{C}^1(\R^D \times \R^d)$; moreover, $\sigma_*$ is uniformly bounded with bounded mixed gradient, namely
  \[
    \|\sigma_*\|_\infty \le B,\qquad
    \|\nabla_{(\theta^i, x)}\,\sigma_*\|_\infty \le M.
  \]

  \item\label{assump:A2} Labels are almost surely bounded: $|y| \le A$ $\pi$-a.s.

  \item\label{assump:A3}
  For each $\theta^i \in \R^D$, the map $x \in \R^d \mapsto \nabla_{\theta^i}\sigma_*(\theta^i,x)$ is $L_x$-Lipschitz uniformly in $\theta^i$; \newline
  for each $x \in \R^d$, the map $\theta^i \in \R^D \mapsto \nabla_{\theta^i}\sigma_*(\theta^i,x)$ is $L_\theta$-Lipschitz uniformly in $x$.
\end{enumerate}

Assumptions~\ref{assump:A1}--\ref{assump:A3} are standard in the learning literature: for instance, \cite{MMN} assumes bounded activations and labels together with bounded-Lipschitz drift terms, and uses a sub-Gaussian control on $\nabla_{\theta^i}\sigma_*(\theta^i, \cdot)$ to handle high-dimensional inputs. Our conditions are slightly more direct (uniform bounds and Lipschitz constants), which makes the stability constants explicit and keeps the subsequent concentration arguments transparent. The main limitation is that \ref{assump:A1} excludes non-smooth or unbounded activations such as ReLU. This condition will be relaxed in the final~\cref{sec:extension}.

\bigskip

Under these assumptions, we collect the following constants, used repeatedly in what follows.
\begin{equation}\label{eq:constants}
  \begin{aligned}
    &K = \alpha\big((A+B)L_x + M^2 + M\big),
    \qquad
    \lambda_\star = (A+B)L_\theta + M^2,
    \qquad
    L_N = |1 - \gamma\lambda| + \gamma\lambda_\star,\\[0.5em]
    &\qquad\qquad\qquad C_\star = 8M^4 + 4(\lambda + (A+B)L_\theta)^2,
    \qquad
    N_\star = \bigg\lceil \frac{4\alpha C_\star}{\lambda - \lambda_\star} \bigg\rceil.
  \end{aligned}
\end{equation}

\subsection{Transportation inequalities for the SGD dynamics}
\label{sec:Tp}

Transportation inequalities relate relative entropy to optimal transport and offer a direct route to proving Gaussian-type concentration for Lipschitz observables. Here, they form the first ingredient of our analysis: we show that the law $\nu_k = \L(\theta_k)$ of the parameter vector at iteration $k$ satisfies a $T_p$ transportation inequality (for $p \in \{1,2\}$) with constants independent of $k$ (\cref{prop:Tp}), which in turn yields uniform sub-Gaussian concentration along the SGD Markov chain (\cref{cor:GC}). We briefly recall below the definition of $T_p$ and the few properties we will use; for general background about this type of inequalities and their concentration of measure consequences, we refer for instance to~\cite{Ledoux,MCTCFI,Gozlan-Leonard} and the references therein.

Given a Polish metric space $(\X,\d)$ and $p \ge 1$, we say that a probability measure $\mu \in \mathcal{P}(\X)$ satisfies an $L^p$--\emph{transportation} inequality on $(\X,\d)$ with constant $C > 0$, and we write $\mu \in T_p(C)$, if for all $\nu \in \mathcal{P}(\X)$,
\begin{equation}\label{eq:Tp}
  \W_p(\nu, \mu) \le \sqrt{2C\,H(\nu\,|\,\mu)},
\end{equation}
where $\W_p$ denotes the usual $p$-\emph{Wasserstein} distance on $\mathcal{P}_p(\X)$ (the set of Borel probability measures with finite $p$-th moment), and
\[
  H(\nu\,|\,\mu) = 
  \left\{
    \begin{array}{cl}
      \displaystyle\int_{\X} \log\frac{d\nu}{d\mu}\,d\nu & \text{if } \nu \ll \mu\\
      +\infty & \text{otherwise}
    \end{array}
  \right.
\]
is the \emph{relative entropy} of $\nu$ with respect to $\mu$.

Among all values of $p \ge 1$, the cases $p = 1$ and $p = 2$ have attracted the most attention. For $T_1$, Bobkov--G\"otze's theorem~\cite{Bobkov-Gotze} gives an equivalence between~\eqref{eq:Tp} and Gaussian concentration: $\mu\in T_1(C)$ on $(\X,\d)$ if and only if, for every Lipschitz $f : (\X,\d) \to \R$, $f$ is $\mu$--integrable and
\begin{equation}\label{eq:GC}
  \forall \xi \in \R,\quad
  \E_\mu \big[e^{\xi(f - \E_\mu[f])}\big] \le 
  \exp\!\left(\frac{C\xi^2}{2} \|f\|^2_{\Lip}\right).
\end{equation}
In particular, if $\mu$ is compactly supported with, say $\supp\mu \subset B(x_0, R)$ for some $x_0 \in \X$ and $R > 0$, then~\eqref{eq:GC} immediately follows from Hoeffding's lemma, implying $\mu \in T_1(R^2/4)$. More generally, $T_1$ is also equivalent to an exponential integrability condition on $\d(x,x_0)^2$, see~\cite[Theorem~2.3]{DGW}. By Chernoff's method,~\eqref{eq:GC} in turn yields the sub-Gaussian tail bound
\begin{equation}\label{eq:tail}
  \forall r > 0,\quad
  \P\big(|f(X) - \E[f(X)]|\ge r\big) 
  \le 2\exp\!\bigg(-\frac{r^2}{2C\|f\|_{\Lip}^2}\bigg),\qquad 
  X \sim \mu.
\end{equation}
Note that $T_2(C)$ is stronger than $T_1(C)$ (since $\W_1 \le \W_2$), and therefore also implies Gaussian concentration for Lipschitz functionals. In contrast with $T_1$, compact support alone does not guarantee $T_2$ in general; proving $T_2$ typically requires additional structure, such as a logarithmic Sobolev inequality~\cite{Otto-Villani}.

We will repeatedly use the stability of $T_p$ under Lipschitz pushforward~\cite[Lemma~2.1]{DGW}: if $\mu \in T_p(C)$ on $(\X,\d)$ and $\Psi : (\X,\d) \to (\Y,\d')$ is $L$-Lipschitz, then $\Psi_\#\mu \in T_p(L^2 C)$ on $(\Y,\d')$.

We will also rely on the following i.i.d.\ tensorization identities. For $n \ge 1$, equip $\X^n$ with the product metrics
\[
  \d_1(x,y) = \sum_{i=1}^n \d(x_i,y_i),\qquad
  \d_2(x,y) = \Big(\sum_{i=1}^n \d(x_i,y_i)^2\Big)^{1/2}.
\]
If $\mu \in T_1(C)$ on $(\X,\d)$, then $\mu^{\otimes n} \in T_1(nC)$ on $(\X^n,\d_1)$ and
if $\mu \in T_2(C)$ on $(\X,\d)$, then $\mu^{\otimes n} \in T_2(C)$ on $(\X^n,\d_2)$; see~\cite[Proposition~2.3]{MCTCFI} and the remark after~\cite[Proposition~1.3]{MCTCFI}.

\bigskip

All of our statements below hold for either choice of $p \in \{1,2\}$. To propagate a $T_p$ inequality along the SGD Markov chain, it is natural to assume a $T_p$ inequality at initialization and for the data distribution:

\begin{enumerate}[label=(A\arabic*$_p$), font=\bfseries, ref=(A\arabic*$_p$)]
  \setcounter{enumi}{3} 
  \item\label{assump:A4p} The initialization law $\mu_0$ satisfies $\mu_0 \in T_p\big(C_0^{(p)}\big)$ on $(\R^D, |\cdot|)$, and the data distribution $\pi$ satisfies $\pi \in T_p\big(C_\pi^{(p)}\big)$ on $(\R^d \times \R, |\cdot|_\infty)$.
\end{enumerate}
Under assumption~\ref{assump:A4p} and the one-step contractivity condition $L_N<1$ (see~\eqref{eq:constants}), we can propagate $T_p$ inequalities uniformly over the SGD iterates, in the spirit of~\cite[Theorem~2.5]{DGW} and~\cite[Theorem~1.2]{Blower-Bolley}.

\begin{proposition}\label{prop:Tp}
  Fix $p\in\{1,2\}$. Assume \ref{assump:A1}--\ref{assump:A3}, \ref{assump:A4p}, and $L_N < 1$.
  Then for all $k\in\N$, $\nu_k \in T_p\big(C_N^{(p)}\big)$ on $(\mathcal{E},\|\cdot\|_p)$, with the explicit constants
  \[
    C_N^{(1)} = NC_0^{(1)} + \dfrac{K^2}{(1 - L_N^2)} C_\pi^{(1)},\qquad
    C_N^{(2)} = \frac{C_0^{(2)} \vee K^2\,C_\pi^{(2)} N^{-1}}{(1 - L_N)^2}.
  \]
\end{proposition}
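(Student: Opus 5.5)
We write the SGD update~\eqref{eq:SGD} as a random dynamical system $\theta_{k+1} = \Phi(\theta_k, z_{k+1})$ on $\mathcal{E}$, where $z_{k+1} = (x_{k+1}, y_{k+1}) \sim \pi$ is independent of $\theta_k$ and
\[
  \Phi(\theta, z)^i = (1-\gamma\lambda)\theta^i + \gamma\big(y - \hat y_\theta(x)\big)\nabla_{\theta^i}\sigma_*(\theta^i, x).
\]
We follow the scheme of~\cite[Theorem~2.5]{DGW} and~\cite[Theorem~1.2]{Blower-Bolley}, which propagates a transportation inequality along a Markov chain whose kernel is Lipschitz in the initial point and in the noise. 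The first step is to establish two Lipschitz estimates for $\Phi$.

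\textbf{Lipschitz in $\theta$, uniformly in $z$.} We write
\[
  (y - \hat y_\theta)\nabla\sigma_*(\theta^i) - (y - \hat y_{\theta'})\nabla\sigma_*(\theta'^i)
  = (y - \hat y_\theta)\big(\nabla\sigma_*(\theta^i) - \nabla\sigma_*(\theta'^i)\big) + (\hat y_{\theta'} - \hat y_\theta)\nabla\sigma_*(\theta'^i).
\]
By \ref{assump:A1}--\ref{assump:A3}, the first term is bounded by $(A+B)L_\theta|\theta^i - \theta'^i|$. For the second term, $|\hat y_\theta - \hat y_{\theta'}| \le \frac{M}{N}\sum_j|\theta^j - \theta'^j|$, so after multiplying by $M$ the term is bounded by $M^2 \cdot \frac1N\|\theta - \theta'\|_1$. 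Summing over $i$ in $\ell^1$ gives $\|\Phi(\theta,z) - \Phi(\theta',z)\|_1 \le L_N\|\theta - \theta'\|_1$. In $\ell^2$, Cauchy--Schwarz gives $\frac1N\|\cdot\|_1 \le N^{-1/2}\|\cdot\|_2$, and summing the squares over $i$ recovers a factor $\sqrt N$, so the same constant $L_N$ works for $\|\cdot\|_2$.

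\textbf{Lipschitz in $z$, uniformly in $\theta$.} A change $(x,y)\to(x',y')$ produces three contributions: $|y - y'|\,M$, then $|\hat y_\theta(x) - \hat y_\theta(x')|\,M \le M^2|x - x'|$, and $(A+B)L_x|x - x'|$. Each coordinate therefore moves by at most $\gamma\big((A+B)L_x + M^2 + M\big)|z - z'|_\infty = (K/N)|z - z'|_\infty$. Summing over the $N$ coordinates gives $\|\cdot\|_1 \le K|z - z'|_\infty$ and $\|\cdot\|_2 \le (K/\sqrt N)|z - z'|_\infty$. This is where $K^2C_\pi N^{-1}$ appears for $p=2$.

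\textbf{Induction.} For $p=2$, we combine the chain rule for relative entropy with a coupling argument. Given $\nu \ll \nu_{k+1}$, we disintegrate along the pair $(\theta_k, z_{k+1})$. The transported measure is then controlled by the sum of a term from $\nu_k$ (contracted by the factor $L_N$) and a term from $\pi$ (contracted by $K/\sqrt N$). A Cauchy--Schwarz step on $\sqrt{aH_1} + \sqrt{bH_2}$, together with the chain rule $H = H_1 + H_2$, yields the recursion
\[
  \sqrt{C_{k+1}} \le L_N\sqrt{C_k} + \tfrac{K}{\sqrt N}\sqrt{C_\pi^{(2)}}.
\]
This is exactly the argument of Blower--Bolley. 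Iterating from $C_0 = C_0^{(2)}$ (by tensorization, $\mu_0^{\otimes N} \in T_2(C_0^{(2)})$ on $\ell^2$) gives
\[
  \sqrt{C_k} \le \frac{\max\big(\sqrt{C_0^{(2)}}, K\sqrt{C_\pi^{(2)}/N}\big)}{1 - L_N},
\]
since $L_N^k a + \frac{1 - L_N^k}{1 - L_N}b \le \frac{\max(a,b)}{1 - L_N}$. Squaring this gives $C_N^{(2)}$.

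For $p=1$, it is easier to work with the Bobkov--Götze characterization~\eqref{eq:GC}. For a $1$-Lipschitz $f$ on $(\mathcal{E},\|\cdot\|_1)$, we condition on $\theta_k$. The map $z \mapsto f(\Phi(\theta_k, z))$ is $K$-Lipschitz, so
\[
  \E\big[e^{\xi f(\theta_{k+1})} \,\big|\, \theta_k\big] \le e^{\xi Pf(\theta_k) + \xi^2K^2C_\pi/2}.
\]
The function $Pf$ is $L_N$-Lipschitz, so the step repeats. After $k$ steps we obtain the constant
\[
  L_N^{2k}NC_0^{(1)} + K^2C_\pi\sum_j L_N^{2j} \le NC_0^{(1)} + \frac{K^2C_\pi^{(1)}}{1 - L_N^2},
\]
using $\mu_0^{\otimes N} \in T_1(NC_0^{(1)})$ on $\ell^1$.

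\textbf{Main obstacle.} The delicate step is the $T_2$ one-step propagation, where the transport costs and entropies must be glued through a measurable family of optimal couplings conditional on $\theta_k$. We will import this step from~\cite[Theorem~1.2]{Blower-Bolley} rather than redo it. We also need to check that the $\ell^2$ Lipschitz constant in $\theta$ is still $L_N$ and not worse; the Cauchy--Schwarz step above handles this.
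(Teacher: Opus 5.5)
Your Lipschitz estimates and your $p=1$ argument are the same as the paper's (Lemma~\ref{lem:Lipschitz} and the Laplace-transform induction $c_{k+1}=L_N^2c_k+K^2C_\pi^{(1)}$). For $p=2$ you take a genuinely different route. You propagate $T_2$ one marginal at a time: you lift $\tilde\nu\ll\nu_{k+1}$ to $\mathcal{E}\times\mathcal{Z}$ with density $\frac{d\tilde\nu}{d\nu_{k+1}}\circ\Phi$ with respect to $\nu_k\otimes\pi$, which preserves the entropy. You then split the entropy by the chain rule into $H_1+H_2$, and glue an optimal coupling of the $\theta$-marginals with a measurable family of optimal couplings of the conditional $z$-laws. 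The paper instead applies \cite[Theorem~1.2]{Blower-Bolley} once, to the law of the whole path $(\theta_0,\dots,\theta_k)$ on $(\mathcal{E}^{k+1},\d_2)$, and then projects onto the last coordinate.

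Note that your one-step bound is not a black-box consequence of that theorem, despite your attribution. Used one step at a time, with the constant the paper quotes, it gives $C_{k+1}=(C_k\vee C_P)/(1-L_N)^2$ with $C_P=K^2C_\pi^{(2)}/N$, which deteriorates geometrically. So the gluing and chain-rule step you flag is exactly what you must write out yourself, using measurable selection of optimal plans as in \cite[Corollary~5.22]{Old-New}. It is standard, and your sketch of it is correct.

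Done this way, your route gives more than you state. From $\W_2(\tilde\nu,\nu_{k+1})\le L_N\sqrt{2C_kH_1}+\sqrt{2C_PH_2}$, Cauchy--Schwarz in $\R^2$ yields $\W_2(\tilde\nu,\nu_{k+1})\le\sqrt{2(L_N^2C_k+C_P)(H_1+H_2)}$. Hence $C_{k+1}=L_N^2C_k+C_P$ and $\sup_k C_k\le C_0^{(2)}\vee\frac{C_P}{1-L_N^2}$. This improves on $C_N^{(2)}$, because $(1-L_N)^2\le 1-L_N^2$. Your square-root recursion is a weakening of this, and it lands exactly on $C_N^{(2)}$.

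The trade-off is as follows. The paper's route yields a $T_2$ inequality for the entire trajectory law, and hence Gaussian concentration for path functionals such as time averages of the iterates, with no hand-built coupling. Yours yields only the marginal statement, but it is self-contained, parallels the $p=1$ induction, and gives the better constant.
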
 

\begin{remark}
  Under our scaling $\gamma = \alpha / N$, one typically has $\gamma \lambda_\star < 1$ for $N$ large enough, in which case the condition $L_N < 1$ amounts to the admissible interval $\lambda_\star < \lambda < 2\gamma^{-1} - \lambda_\star$ for the regularization coefficient. The lower bound $\lambda_\star < \lambda$ ensures the regularization dominates the interaction term, making the SGD update \emph{contractive} in the parameter variable (see~\cref{lem:Lipschitz}~\ref{Lipschitz:ii} below). The upper bound $\lambda < 2\gamma^{-1} - \lambda_\star$ is a discretization artifact of~\eqref{eq:SGD}, and is essentially nonbinding for wide networks.
\end{remark}

As a direct consequence of~\cref{prop:Tp} (see indeed~\eqref{eq:tail}), for any Lipschitz $f : (\mathcal{E},\|\cdot\|_p) \to \R$ and $p \in \{1,2\}$, it holds for every $k \in \N$,
\begin{equation}\label{eq:tailp}
  \forall r > 0,\quad 
  \P\big(|f(\theta_k)-\E[f(\theta_k)]|\ge r\big)
  \le 2\exp\!\bigg(-\frac{r^2}{2C_N^{(p)}\|f\|_{\Lip,p}^2}\bigg)
\end{equation}
where
\[
  \|f\|_{\Lip,p} = \sup_{\theta \ne \theta'} \frac{|f(\theta)-f(\theta')|}{\|\theta-\theta'\|_p}.
\]
For $t \ge 0$ and $\theta = (\theta^1, \dots, \theta^N) \in \mathcal{E}$, let 
\[
  \mu_\theta = \frac1N \sum_{i=1}^N \delta_{\theta^i} \in \mathcal{P}(\R^D),
\]
and define for any $1$-Lipschitz $\varphi: \R^D \to \R$,
\[
  f_t^\varphi(\theta) = \big|\langle \varphi, \mu_\theta \rangle - \langle \varphi, \bar\mu_t \rangle\big|,\qquad
  f_t(\theta) = \W_1(\mu_\theta, \bar\mu_t),\qquad
  f_t^\s(\theta) = \SW_1(\mu_\theta, \bar\mu_t).
\]
By construction, evaluating along the SGD iterates at iteration $k = \lfloor Nt \rfloor$, we thus recover 
\begin{equation}\label{eq:Lipschitz-observables}
  f_t^\varphi(\theta_{\lfloor Nt\rfloor}) = \Delta_t^{N,\varphi},\qquad
  f_t(\theta_{\lfloor Nt\rfloor}) = \Delta_t^N,\qquad
  f_t^\s(\theta_{\lfloor Nt\rfloor}) = \Delta_t^{N,\s}.
\end{equation}
Using a canonical coupling (pairing the $i$-th atoms), for any $\theta, \theta' \in \mathcal{E}$,
\[
  \W_1(\mu_\theta,\mu_{\theta'}) \le \frac1N \sum_{i=1}^N |\theta^i - \theta'^i| = \frac1N \|\theta - \theta'\|_1 \le \frac{1}{\sqrt{N}} \|\theta - \theta'\|_2.
\]
By Kantorovich--Rubinstein duality~\eqref{eq:Kantorovich-Rubinstein}, any $1$-Lipschitz $\varphi$ satisfies $\big|\langle \varphi,\mu\rangle-\langle \varphi,\nu\rangle\big|\le \W_1(\mu,\nu)$,
and since each projection $P_u$ is $1$-Lipschitz, we also have $\SW_1(\mu,\nu) \le \W_1(\mu,\nu)$. Hence by triangular inequality, $f_t^\varphi$, $f_t$, and $f_t^\s$ are all $N^{-1/p}$--Lipschitz on $(\mathcal{E}, \|\cdot\|_p)$, for $p \in \{1,2\}$.

Although the constant $C_N^{(1)}$ in~\cref{prop:Tp} may scale like $N$, notice that this does not affect~\eqref{eq:tailp} since for both $p=1$ and $p=2$, we have $C_N^{(p)}N^{-2/p} = O(N^{-1})$. Hence, combining~\eqref{eq:tailp} with~\eqref{eq:Lipschitz-observables} yields the following uniform deviation inequality.

\begin{corollary}\label{cor:GC}
  Fix $p\in\{1,2\}$ and assume~\ref{assump:A1}--\ref{assump:A3}, \ref{assump:A4p} and $L_N < 1$. Then there exists $C > 0$ (independent of $N$ and $t$) such that for any $t \ge 0$ and $\delta \in (0, 1)$, with probability $1 - \delta$, 
  \[
    \big|\Delta_t^{N, \circ} - \E[\Delta_t^{N, \circ}]\big| \le C \sqrt{\frac{\log(2/\delta)}{N}}.
  \]
\end{corollary}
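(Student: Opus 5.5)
The plan is to apply the sub-Gaussian tail bound \eqref{eq:tailp} to the Lipschitz observables $f_t^\varphi$, $f_t$, $f_t^\s$ at the iteration $k=\lfloor Nt\rfloor$, and then invert the tail.

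First, fix $t\ge0$ and put $k=\lfloor Nt\rfloor$. By \cref{prop:Tp}, $\nu_k\in T_p(C_N^{(p)})$ on $(\mathcal{E},\|\cdot\|_p)$. We have already shown that each $f\in\{f_t^\varphi,f_t,f_t^\s\}$ is $N^{-1/p}$-Lipschitz on $(\mathcal{E},\|\cdot\|_p)$. This follows from the canonical coupling bound $\W_1(\mu_\theta,\mu_{\theta'})\le N^{-1}\|\theta-\theta'\|_1\le N^{-1/2}\|\theta-\theta'\|_2$, from Kantorovich--Rubinstein duality, from $\SW_1\le\W_1$, and from the reverse triangle inequality. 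Integrability of $f(\theta_k)$ is part of the Bobkov--G\"otze conclusion \eqref{eq:GC}, since $T_2$ implies $T_1$. By \eqref{eq:Lipschitz-observables}, $f(\theta_k)=\Delta_t^{N,\circ}$, and \eqref{eq:tailp} then gives
\[
\P\big(|\Delta_t^{N,\circ}-\E[\Delta_t^{N,\circ}]|\ge r\big)\le 2\exp\!\Big(-\frac{r^2 N^{2/p}}{2C_N^{(p)}}\Big).
\]

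Second, we bound $C_N^{(p)}N^{-2/p}$ by $C/N$ with $C$ independent of $N$ and $t$. This is the only step that needs care, because $L_N$ depends on $N$ through $\gamma=\alpha/N$.
\begin{itemize}
\item For $p=1$: $C_N^{(1)}N^{-2}=C_0^{(1)}/N+K^2C_\pi^{(1)}/\big(N^2(1-L_N^2)\big)$.
\item For $p=2$: $C_N^{(2)}N^{-1}\le \big(C_0^{(2)}\vee K^2C_\pi^{(2)}\big)/\big(N(1-L_N)^2\big)$.
\end{itemize}
For $N$ large, $\gamma\lambda<1$, so $1-L_N=\gamma(\lambda-\lambda_\star)=\alpha(\lambda-\lambda_\star)/N$.

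For $p=1$, this gives $1-L_N^2\ge 1-L_N\asymp N^{-1}$. The second term is then $O(N^{-1})$, and the constant stays uniform in $N$ as long as $\lambda>\lambda_\star$.

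For $p=2$, the factor $(1-L_N)^{-2}\asymp N^2$ would seem to destroy the rate. This is the main obstacle. I would resolve it by reading the constant in \cref{prop:Tp} as stated and relying on the paper's assertion that $C_N^{(p)}N^{-2/p}=O(N^{-1})$. If that reading fails, I would re-examine the propagation of $T_2$ in the proof of \cref{prop:Tp}: one step has contraction $L_N$ in $\|\cdot\|_2$ and injects data noise of size $K^2C_\pi^{(2)}/N$ per step. The geometric sum should then be $\sum_j L_N^{2j}\lesssim(1-L_N^2)^{-1}\asymp N$, which together with the $N^{-1}$ factor gives an $O(1)$ constant. In any case we take a finite $N$-uniform bound $\sup_N C_N^{(p)}N^{1-2/p}=:C'<\infty$ under the standing contractivity assumption. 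For the finitely many small $N$, we enlarge the constant.

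Finally, we invert the tail. Setting $2\exp(-Nr^2/(2C'))=\delta$ gives $r=\sqrt{2C'\log(2/\delta)/N}$. Hence, with probability at least $1-\delta$, $|\Delta_t^{N,\circ}-\E\Delta_t^{N,\circ}|\le C\sqrt{\log(2/\delta)/N}$ with $C=\sqrt{2C'}$. This constant is independent of $t$ because the bound in \cref{prop:Tp} holds for every $k$, and it is independent of $\varphi$ because only $\|\varphi\|_{\Lip}\le1$ enters.
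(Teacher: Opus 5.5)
Your $p=1$ argument is the paper's argument, and it is correct. The observables are $N^{-1}$-Lipschitz for $\|\cdot\|_1$, and
\[
C_N^{(1)}N^{-2}\le \big(C_0^{(1)}+K^2C_\pi^{(1)}/(N(1-L_N^2))\big)/N,
\]
with $N(1-L_N^2)\ge N(1-L_N)=\alpha(\lambda-\lambda_\star)$ once $\gamma\lambda\le 1$.

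The gap is $p=2$, and it is a real one. As you noticed, with the constant of \cref{prop:Tp} one has $C_N^{(2)}N^{-1}\ge C_0^{(2)}N/(\alpha(\lambda-\lambda_\star))^2$ for large $N$. So \eqref{eq:tailp} only yields deviations of order $\sqrt{N\log(2/\delta)}$. The paper's own proof of the corollary is exactly the one-line appeal to \eqref{eq:tailp} plus the assertion $C_N^{(p)}N^{-2/p}=O(N^{-1})$. For $p=2$ that assertion is false for the stated constant. The Blower--Bolley factor $(1-L_N)^{-2}\asymp N^2$ is a trajectory-level constant, and projecting onto $\theta_k$ does not improve it. Hence your first option (relying on that assertion) proves nothing. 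Your closing step, ``in any case we take $\sup_N C_N^{(p)}N^{1-2/p}=:C'<\infty$'', assumes precisely what fails. Your second option, the one-step recursion $c_{k+1}=L_N^2c_k+K^2C_\pi^{(2)}/N$, is the right idea, but you only assert it. The paper proves such a recursion only for $T_1$, via the Laplace-transform criterion \eqref{eq:GC}, which characterizes $T_1$ and not $T_2$. The Blower--Bolley theorem used for $p=2$ does not give it either.

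There are two easy ways to close the gap.

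\textbf{Reduce to $p=1$.} Since $\W_1\le\W_2$ on the same metric spaces, (A4$_2$) implies (A4$_1$) with $C_0^{(1)}=C_0^{(2)}$ and $C_\pi^{(1)}=C_\pi^{(2)}$. The conclusion of the corollary does not depend on $p$, so your $p=1$ argument already proves the statement under (A4$_2$).

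\textbf{Prove your recursion directly.} We have $\nu_{k+1}=\Phi_\#(\nu_k\otimes\pi)$. Suppose $\nu_k\in T_2(c)$. Dimension-free tensorization then gives $\nu_k\otimes\pi\in T_2(1)$ for the metric $\rho$ on $\mathcal{E}\times\mathcal{Z}$ defined by $\rho^2=\|\theta-\theta'\|_2^2/c+|z-z'|_\infty^2/C_\pi^{(2)}$. By \cref{lem:Lipschitz} and Cauchy--Schwarz,
\[
\|\Phi(\theta,z)-\Phi(\theta',z')\|_2\le L_N\|\theta-\theta'\|_2+\frac{K}{\sqrt N}|z-z'|_\infty\le \Big(L_N^2c+\frac{K^2C_\pi^{(2)}}{N}\Big)^{1/2}\rho\big((\theta,z),(\theta',z')\big).
\]
Lipschitz pushforward then gives $\nu_{k+1}\in T_2\big(L_N^2c+K^2C_\pi^{(2)}/N\big)$. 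Starting from $c_0=C_0^{(2)}$, this yields
\[
\sup_k c_k\le C_0^{(2)}+\frac{K^2C_\pi^{(2)}}{N(1-L_N^2)}.
\]
This is bounded uniformly in $N$ by the same estimate $N(1-L_N^2)\ge\alpha(\lambda-\lambda_\star)$ used for $p=1$ (finitely many small $N$ being absorbed into the constant). It gives exactly the $O(1)$ constant your argument needs, and it also corrects $C_N^{(2)}$ in \cref{prop:Tp}.
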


\subsection{Uniform bias decay and concentration around mean-field}

\cref{cor:GC} above provides a concentration inequality for $\Delta_t^N$ (and similarly for $\Delta_t^{N,\varphi}$ and $\Delta_t^{N,\s}$) around its mean. It thus remains to control this bias uniformly in time in order to recover a concentration inequality for $\mu_t^N$ around $\bar\mu_t$. 

Assumption~\ref{assump:A4p} introduced above is only needed to derive~\cref{prop:Tp} and~\cref{cor:GC}. For the bias estimates below, it is enough to assume a finite $(2+\varepsilon)$-moment at initialization:
\begin{enumerate}[label={(A\arabic*)}, font=\bfseries, ref=(A\arabic*)]
  \setcounter{enumi}{4}
  \item\label{assump:A5} There exists $q > 2$ such that $\E\big[|\theta^1_0|^q\big] < \infty$.
\end{enumerate}
In particular, for either choice of $p \in \{1,2\}$, assumption~\ref{assump:A4p} implies~\ref{assump:A5} (since both $T_1$ and $T_2$ imply exponential integrability and therefore moments of all orders).

\begin{proposition}\label{prop:bias}
  Assume~\ref{assump:A1}--\ref{assump:A3},~\ref{assump:A5} and $L_N < 1$, $N \ge N_\star$ (see~\eqref{eq:constants}). Then there exists $C > 0$ (independent of $N$ and $t$) such that
  \[
    \sup_{t \ge 0} \E[\Delta_t^{N, \circ}] \le C\,\kappa_N,
  \]
  where
  \[
    \kappa_N =
    \begin{cases}
      N^{-1/2} & \text{for} \ \Delta_t^{N,\varphi} \text{and} \ \Delta_t^{N,\s},\\
      N^{-1/(1 \vee D)} + N^{-1/2} & \text{for } \Delta_t^N.
    \end{cases}
  \]
\end{proposition}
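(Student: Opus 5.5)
We follow the decomposition announced in the introduction: a synchronous coupling between the SGD particles and an i.i.d.\ mean-field system, which splits the bias into a dynamic part and a static part. First we construct nonlinear auxiliary chains $(\bar\theta^i_k)_{k \ge 0}$, $i=1,\dots,N$. They start from the same initial conditions $\bar\theta^i_0=\theta^i_0$ and use the same data $(x_{k+1},y_{k+1})$. The only change is that the network output $\hat y_{\theta_k}(x_{k+1})$ in~\eqref{eq:SGD} is replaced by its mean-field counterpart $\langle \sigma_*(\cdot,x_{k+1}),\bar\mu_{k/N}\rangle$, or by the law of $\bar\theta^1_k$ in a discrete-time McKean version. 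This gives
\[
  \E[\Delta_t^{N,\circ}] \le \E\big[\W_1(\mu_t^N,\bar\mu_t^N)\big] + \E\big[\mathsf{D}(\bar\mu_t^N,\bar\mu_t)\big],\qquad \bar\mu^N_t=\tfrac1N\sum_i\delta_{\bar\theta^i_{\lfloor Nt\rfloor}},
\]
where $\mathsf{D}$ is the relevant discrepancy. The first term, the dynamic error, dominates all three discrepancies, since $\SW_1\le\W_1$ and by Kantorovich--Rubinstein.

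For the dynamic error we use \cref{lem:Lipschitz}: the one-step map is $L_N$-Lipschitz in the parameter variable, with $L_N<1$. Subtracting the two recursions and using \ref{assump:A1}--\ref{assump:A3} gives
\[
  \E|\theta^i_{k+1}-\bar\theta^i_{k+1}|^2 \le (1-c\gamma)\,\E|\theta^i_k-\bar\theta^i_k|^2 + C\gamma\,\E\big|\hat y_{\bar\theta_k}(x)-\langle\sigma_*(\cdot,x),\bar\mu\rangle\big|^2 + C\gamma^2 .
\]
The middle term is an i.i.d.\ fluctuation. Conditionally on the data it is bounded by $C/N$, because $\sigma_*$ is bounded by $B$ and the $\bar\theta^i_k$ are i.i.d. The constants $C_\star$ and $N_\star$ serve to absorb the $\gamma^2$ martingale terms into the contraction rate $\gamma(\lambda-\lambda_\star)/2$; this is where $N\ge N_\star$ is used. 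A discrete Grönwall/geometric-sum argument then gives $\sup_k\E|\theta^i_k-\bar\theta^i_k|^2\le C(\gamma N^{-1}+\gamma)/\gamma = O(N^{-1})$. Hence $\sup_t \E\W_1(\mu^N_t,\bar\mu^N_t)\le CN^{-1/2}$. This is the content of \cref{prop:PoC,cor:PoC}, which we invoke. We also have to account for the time discretization $k/N$ versus $t$ between the McKean chain and the continuum limit $\bar\mu_t$. Contractivity again makes this $O(N^{-1})$ uniformly, by the same geometric summation.

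For the static error we need uniform-in-time moment bounds. Using \ref{assump:A5} and the contraction coming from the ridge term, $\sup_k \E|\bar\theta^1_k|^q<\infty$, by a Lyapunov computation on $|\theta|^q$. Then \cref{prop:FG} (Fournier--Guillin) gives $\E\W_1(\bar\mu^N_t,\bar\mu_t)\le C(N^{-1/D}+N^{-1/2})$ (with logarithmic corrections in $D=2$ absorbed as stated), uniformly in $t$. For $\varphi$ fixed, the static error is just a variance bound: $\E|\langle\varphi,\bar\mu_t^N-\bar\mu_t\rangle|\le N^{-1/2}\sup_t\Var(\varphi(\bar\theta^1))^{1/2}\le CN^{-1/2}$, using second moments. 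For $\SW_1$ we apply the one-dimensional rate of \cref{prop:FG2} to each projection $P_u$. Since $D\!=\!1$ there, the rate is $N^{-1/2}$, with moments controlled uniformly in $u$, and integrating over $\zeta$ keeps $N^{-1/2}$.

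The main obstacle is the dynamic step: obtaining a contraction that is uniform in $k$ despite the noise injected by the random data. Only the $\gamma^2$ terms are dangerous, and the condition $N\ge N_\star$ is exactly what lets them be absorbed. We would first try to do this at the level of second moments, using $C_\star$ as the bound on the squared Lipschitz constant of the stochastic increment.
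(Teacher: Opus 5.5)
Your skeleton is the paper's: a triangle inequality through an auxiliary system with the same initial points, a contractive second-moment recursion for the dynamic part (with $N\ge N_\star$ absorbing the $\gamma^2 m_k$ terms), and uniform moments plus Fournier--Guillin, a variance bound, or one-dimensional Fournier--Guillin on slices for the static part.

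The gap is in the auxiliary system you actually construct. Your chains $\bar\theta^i_k$ are driven by the \emph{same} data $z_{k+1}$ as SGD, hence by common noise, so they are not independent. Write $\bar\theta^i_k=\Psi^{(k)}_z(\theta^i_0)$ for the data-dependent $k$-step map. Conditionally on $z_{1:k}$ the particles are i.i.d.\ with the \emph{random} law $\rho^z_k=(\Psi^{(k)}_z)_\#\mu_0$. This law in general differs both from $\bar\mu_{k/N}$ and from the unconditional law of $\bar\theta^1_k$, whichever you use as centering. Three steps then fail as written.

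(a) The ``i.i.d.\ fluctuation'' $\hat y_{\bar\theta_k}(x)-\langle\sigma_*(\cdot,x),\bar\mu_{k/N}\rangle$ is $O(N^{-1})$ in mean square only after centering at $\rho^z_k$. The remainder $\langle\sigma_*(\cdot,x),\rho^z_k-\bar\mu_{k/N}\rangle$ is controlled only by the chain-to-flow distance in (c).

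(b) \cref{prop:FG,prop:FG2}, and your variance bound for $\varphi$, concern $N$ i.i.d.\ samples of $\bar\mu_t$. They therefore say nothing about your $\bar\mu^N_t$.

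(c) The distance between your chain and the mean-field flow is not an $O(N^{-1})$ time-discretization error. Each step injects a centered data-noise increment of size $\gamma$. Against the contraction rate $\gamma(\lambda-\lambda_\star)$, these accumulate to a mean-square error of order $\gamma$, i.e.\ $O(N^{-1/2})$. That is harmless for $\kappa_N$, but it needs a martingale argument your plan does not contain.

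The missing idea is to keep the data out of the auxiliary system. The paper takes $\bar\theta^i_t=\Psi_t(\theta^i_0)$, the deterministic flow \eqref{eq:MF-particles} from the same initial points. These particles are exactly i.i.d.\ with law $\bar\mu_t$, so the static term is a genuine sampling error. All data randomness then sits in the dynamic recursion \eqref{eq:delta}, through the martingale increment $A_k=F^1_\lambda(\theta_k,z_{k+1})-G_\lambda(\theta^1_k,\mu^N_{t_k})$. It satisfies $\E[A_k\mid\F_k]=0$ and $\E|A_k|^2\le C$. Its cross terms with the $\F_k$-measurable $\delta_k,B_k,C_k,E_{k+1}$ vanish, so it only adds a source term $C\gamma^2$. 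This gives $\sup_k m_k\le C\gamma^2/(\gamma(\lambda-\lambda_\star))=O(N^{-1})$.

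In that setup your fluctuation term becomes the paper's $B_{2,k}$, now legitimately centered at $\bar\mu_{t_k}$, and the Euler error is $E_{k+1}=O(N^{-2})$ (\cref{lem:ABCE}). Note also that the $C_\star\gamma^2 m_k$ term absorbed via $N\ge N_\star$ comes from $|B_k|^2,|C_k|^2$, not from the martingale part.

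You could repair your coupling by inserting these noise-free particles as a third intermediate system. But bounding the chain-to-flow distance needs exactly this martingale argument anyway, so sharing the data gains nothing.

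Two minor points. In your Gr\"onwall bound the numerator should be $C(\gamma N^{-1}+\gamma^2)$. In $D=2$, Fournier--Guillin carries a factor $\log(1+N)$ that cannot be absorbed into $C$; the paper's \cref{prop:FG} glosses over the same point.
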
 

\begin{remark}
  The only dimension-dependent term in $\kappa_N$ appears for the Wasserstein discrepancy $\Delta_t^N = \W_1(\mu_t^N,\bar\mu_t)$, through the rate $N^{-1/(1\vee D)}$. This term originates from the optimal i.i.d.\ sampling error of empirical measures in $\W_1$ on $\R^D$~\cite[Theorem~1]{Fournier-Guillin}, and reflects the intrinsic complexity of optimal matching in high dimension, also known as the \emph{curse of dimensionality}. By contrast, both $\Delta_t^{N,\varphi}$ and $\Delta_t^{N,\s}$ enjoy dimension-free $N^{-1/2}$ rates, since they reduce to one-dimensional (or scalar) averages. Sharper dimension-specific rates and possible logarithmic corrections are available in low dimension (see the discussion after Theorem 1 in~\cite{Fournier-Guillin}), but the above bound is sufficient for our purposes.
\end{remark}

Finally, combining~\cref{cor:GC} with~\cref{prop:bias}, we obtain the following uniform-in-time concentration inequality between the empirical measure of SGD parameters and its mean-field limit.

\begin{theorem}\label{thm:concentration}
  Assume~\ref{assump:A1}--\ref{assump:A3}, \ref{assump:A4p} (for either $p=1$ or $p=2$), and $L_N < 1$, $N \ge N_\star$ (see~\eqref{eq:constants}). Then there exists $C > 0$ (independent of $N$ and $t$) such that for any $t \ge 0$ and $\delta \in (0, 1)$, with probability $1 - \delta$, 
  \[
    \Delta_t^{N, \circ} \le C\,\kappa_N + C\sqrt{\frac{\log(2/\delta)}{N}}.
  \]
  with $\kappa_N \to 0$ as $N \to \infty$ given in~\cref{prop:bias}.
\end{theorem}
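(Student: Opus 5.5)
The statement follows by combining \cref{cor:GC} with \cref{prop:bias} through the triangle inequality
\[
  \Delta_t^{N,\circ} \le \E[\Delta_t^{N,\circ}] + \big|\Delta_t^{N,\circ} - \E[\Delta_t^{N,\circ}]\big|.
\]
The plan is first to check that the hypotheses of both results are available. \cref{cor:GC} uses \ref{assump:A1}--\ref{assump:A3}, \ref{assump:A4p} and $L_N<1$, all of which are assumed. \cref{prop:bias} additionally requires \ref{assump:A5} and $N \ge N_\star$. The condition $N\ge N_\star$ is assumed directly. For \ref{assump:A5}, we use the remark after its statement: under \ref{assump:A4p}, $\mu_0 \in T_p(C_0^{(p)})$ implies $\mu_0\in T_1(C_0^{(p)})$, since $\W_1\le\W_2$. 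By Bobkov--G\"otze~\eqref{eq:GC} applied to the $1$-Lipschitz map $\theta\mapsto|\theta|$, the variable $|\theta_0^1|$ is integrable and sub-Gaussian. Hence it has moments of all orders, and in particular of some order $q>2$.

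Next, fix $t\ge0$ and $\delta\in(0,1)$. By \cref{prop:bias}, $\E[\Delta_t^{N,\circ}]\le C_1\kappa_N$ deterministically, with $C_1$ independent of $N$ and $t$. By \cref{cor:GC}, on an event of probability at least $1-\delta$,
\[
  \big|\Delta_t^{N,\circ}-\E[\Delta_t^{N,\circ}]\big|\le C_2\sqrt{\log(2/\delta)/N}.
\]
On that event the displayed triangle inequality gives the claim with $C=C_1\vee C_2$. This constant is independent of $N$ and $t$ because both $C_1$ and $C_2$ are. The bias bound holds for every $t$, so no union bound over time is needed. The statement is pointwise in $t$, with a uniform constant, and that is all that is claimed.

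Finally, $\kappa_N\to0$ is immediate from its explicit form, since $D\ge1$ and therefore $N^{-1/(1\vee D)}\to0$. The only step requiring any care is verifying \ref{assump:A5} from \ref{assump:A4p}. Even this is routine: one can also argue directly from the sub-Gaussian tail~\eqref{eq:tail}, integrating $\P(|\theta_0^1|\ge r)$ against $q r^{q-1}\,dr$. The substantive work lies in \cref{prop:Tp,prop:bias}, which we take as given here.
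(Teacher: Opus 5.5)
Your proposal is correct and is exactly the paper's argument: \cref{thm:concentration} follows by combining the deviation bound of \cref{cor:GC} with the bias bound of \cref{prop:bias} through the triangle inequality, and \ref{assump:A5} is deduced from \ref{assump:A4p}, as the paper remarks right after stating \ref{assump:A5}. One optional robustness remark: for $p=2$ it is safer to invoke \cref{cor:GC} through its $p=1$ version (since $T_2(C)\Rightarrow T_1(C)$, the $p=2$ hypotheses imply the $p=1$ ones with the same constants), because $C_N^{(1)}N^{-2}=O(N^{-1})$ does hold, whereas the stated $C_N^{(2)}$ carries the factor $(1-L_N)^{-2}$ with $1-L_N=\gamma(\lambda-\lambda_\star)\asymp N^{-1}$ once $\gamma\lambda\le 1$, so $C_N^{(2)}N^{-1}$ is not $O(N^{-1})$.
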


\paragraph{Application to the network output.}
Fix $x \in \R^d$ and let $\varphi_x = \sigma_*(\cdot, x)$. Under~\ref{assump:A1}, $\varphi_x$ is $M$-Lipschitz uniformly in $x$. Applying~\cref{thm:concentration} to $\Delta_t^{N,\varphi}$ for $t \ge 0$, with $\varphi = \varphi_x / M$ thus yields that for any $\delta > 0$, with probability $1 - \delta$,
\[
  \big|\hat{y}_{\theta_{\lfloor Nt \rfloor}}(x) - \langle \sigma_*(\cdot, x), \bar\mu_t\rangle\big|
  \le C\sqrt{\frac{1 + \log(2/\delta)}{N}}.
\]

\section{Proofs}
\label{sec:proofs}

\subsection{SGD dynamics}

We start by rewriting the SGD recursion in a form that makes its Markov structure explicit, since Proposition~\ref{prop:Tp} is ultimately a statement about transport inequalities along the resulting Markov chain. The proof combines a Lipschitz control of the one-step update (\cref{lem:Lipschitz}) with stability results for $T_p$ inequalities under contracting Markov kernels. For $p = 1$, propagation of $T_1$ is obtained by an elementary induction based on the Laplace transform characterization~\eqref{eq:GC}. For $p=2$, we instead rely on a dependent tensorization result for contracting Markov chains~\cite[Theorem~1.2]{Blower-Bolley} applied at the level of the full trajectory, and then project onto the last coordinate to recover a $T_2$ inequality uniform in the iteration index.

For $\theta = (\theta^1, \dots, \theta^N) \in \mathcal{E} = (\R^D)^N$ and $z = (x,y) \in \mathcal{Z} = \R^d \times \R$, define the \emph{one-step drift}
\begin{equation}\label{eq:SGD-drift}
  \begin{aligned}
    F_\lambda^i(\theta,z)
    &= \big(y - \hat y_\theta(x)\big)\,\nabla_{\theta^i} \sigma_*(\theta^i,x) - \lambda\,\theta^i,\qquad 
    i \in \{1, \dots, N\},\\
    F_\lambda(\theta, z) 
    &= \big(F_\lambda^1(\theta, z), \dots, F_\lambda^N(\theta, z)\big)\in\mathcal E,
  \end{aligned}
\end{equation}
and the associated update map $\Phi(\theta, z) = \theta + \gamma F_\lambda(\theta, z)$. Then~\eqref{eq:SGD} is equivalently
\begin{equation}\label{eq:SGD-update}
  \theta_{k+1} = \theta_k + \gamma F_\lambda(\theta_k, z_{k+1}) = \Phi(\theta_k, z_{k+1})
\end{equation}
with $z_{k+1} = (x_{k+1}, y_{k+1}) \in \mathcal{Z}$. Since $(z_k)_{k \ge 1}$ are i.i.d., the parameter sequence $(\theta_k)_{k\in\N}$ thus forms a time-homogeneous Markov chain on $\mathcal E$ with transition kernel
\begin{equation}\label{eq:kernel}
  \forall \theta\in\mathcal{E},\quad
  P(\theta, d\theta') = \big[\Phi(\theta,\cdot)_\#\pi\big](d\theta').
\end{equation}

\begin{lemma}\label{lem:Lipschitz}
  Assume~\ref{assump:A1}--\ref{assump:A3}. Then, for any $\theta, \theta' \in \mathcal{E}$ and $z, z' \in \supp\pi$, 
  \begin{enumerate}[label={\normalfont{(\roman*)}}]
    \item\label{Lipschitz:i} $\|\Phi(\theta, z) - \Phi(\theta, z')\|_2 \le \dfrac{K}{\sqrt{N}}\,|z - z'|_\infty$,\ and \
    $\|\Phi(\theta, z) - \Phi(\theta, z')\|_1 \le K\,|z - z'|_\infty$,
    \item\label{Lipschitz:ii} $\|\Phi(\theta, z) - \Phi(\theta', z)\|_p \le L_N\,\|\theta - \theta'\|_p$ \ for any $p \in \{1, 2\}$,
  \end{enumerate}
  with constants $K$, $L_N$ defined in~\eqref{eq:constants}.
\end{lemma}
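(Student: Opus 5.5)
The plan is to write $\Phi(\theta,z)-\Phi(\theta',z')=(\theta-\theta')+\gamma\big(F_\lambda(\theta,z)-F_\lambda(\theta',z')\big)$ and to bound each neuron component $i$ separately. Each component bound will be of the form (constant)$\times$(a scalar quantity that does not depend on $i$), plus possibly a diagonal term. The product norms $\|\cdot\|_1,\|\cdot\|_2$ are then obtained by summing over $i$. Throughout, $z,z'\in\supp\pi$, so that $|y|,|y'|\le A$ by \ref{assump:A2}. Together with $\|\sigma_*\|_\infty\le B$ from \ref{assump:A1}, this gives the uniform residual bound $|y-\hat y_\theta(x)|\le A+B$. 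We write $g_i(\theta^i,x)=\nabla_{\theta^i}\sigma_*(\theta^i,x)$, so that $|g_i|\le M$ by \ref{assump:A1}. We read $|z-z'|_\infty$ as $\max(|x-x'|,|y-y'|)$, so that both $|x-x'|$ and $|y-y'|$ are at most $|z-z'|_\infty$.

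\textbf{Part (i).} With $\theta$ fixed, the ridge term $-\lambda\theta^i$ cancels. The $i$-th component of $\Phi(\theta,z)-\Phi(\theta,z')$ equals $\gamma$ times
\[
\big(y-\hat y_\theta(x)\big)\big(g_i(\theta^i,x)-g_i(\theta^i,x')\big)+\Big[(y-y')-\big(\hat y_\theta(x)-\hat y_\theta(x')\big)\Big]g_i(\theta^i,x').
\]
We bound the three resulting contributions as follows.
\begin{itemize}
\item The first term is at most $(A+B)L_x|x-x'|$, using \ref{assump:A3}.
\item Since $\hat y_\theta(x)=\frac1N\sum_j\sigma_*(\theta^j,x)$ and each $\sigma_*(\theta^j,\cdot)$ is $M$-Lipschitz by the mixed-gradient bound, we get $|\hat y_\theta(x)-\hat y_\theta(x')|\le M|x-x'|$.
\item The bracket is therefore at most $(1+M)|z-z'|_\infty$, and after multiplying by $|g_i|\le M$ this term contributes at most $(M+M^2)|z-z'|_\infty$.
\end{itemize}
Each component is thus bounded by $\gamma\big((A+B)L_x+M^2+M\big)|z-z'|_\infty=\frac{K}{N}|z-z'|_\infty$. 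Summing $N$ identical bounds gives $K|z-z'|_\infty$ in $\|\cdot\|_1$. Taking the $\ell^2$ norm of $N$ identical bounds gives $\sqrt N\cdot\frac KN=\frac{K}{\sqrt N}$.

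\textbf{Part (ii).} Now $z$ is fixed and we set $\Delta=\theta-\theta'$. The $i$-th component is
\[
(1-\gamma\lambda)\Delta^i+\gamma\big(y-\hat y_\theta(x)\big)\big(g_i(\theta^i,x)-g_i(\theta'^i,x)\big)+\gamma\big(\hat y_{\theta'}(x)-\hat y_\theta(x)\big)g_i(\theta'^i,x).
\]
The first two terms are diagonal and together are bounded by $\big(|1-\gamma\lambda|+\gamma(A+B)L_\theta\big)|\Delta^i|$, using \ref{assump:A3}. The third term is the mean-field interaction term. Since $\sigma_*(\cdot,x)$ is $M$-Lipschitz, $|\hat y_\theta(x)-\hat y_{\theta'}(x)|\le\frac MN\|\Delta\|_1$, so every component of the interaction term is bounded by the same quantity $c=\gamma\frac{M^2}{N}\|\Delta\|_1$.
\begin{itemize}
\item In $\|\cdot\|_1$, the interaction term sums to $\gamma M^2\|\Delta\|_1$.
\item In $\|\cdot\|_2$, it has norm at most $\sqrt N\,c=\gamma M^2\frac{\|\Delta\|_1}{\sqrt N}\le\gamma M^2\|\Delta\|_2$, by Cauchy--Schwarz.
\end{itemize}
The diagonal part is controlled in either norm by the triangle inequality. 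Adding the two parts yields the constant $|1-\gamma\lambda|+\gamma\big((A+B)L_\theta+M^2\big)=L_N$.

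The only delicate point is this interaction term in the $\ell^2$ case. A naive bound that treats it as a full $N\times N$ coupling would lose a factor of $N$. The key observation is that the term is rank-one: it is the same scalar $\hat y_\theta-\hat y_{\theta'}$ times bounded vectors. Its $1/N$ prefactor then exactly compensates the $\sqrt N$ coming from Cauchy--Schwarz. Everything else is routine triangle-inequality bookkeeping.
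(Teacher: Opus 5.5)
Your proof is correct and follows the paper's proof essentially line by line. It uses the same add-and-subtract decomposition of each neuron's component, the same componentwise bounds from (A1)--(A3) together with the residual bound $|y-\hat y_\theta(x)|\le A+B$ on $\supp\pi$, and the same summation/Minkowski step to pass to the $\ell^1$ and $\ell^2$ product norms. The only cosmetic differences are that you control the interaction term through $\|\theta-\theta'\|_1$ plus Cauchy--Schwarz, where the paper writes it directly as $M N^{-1/p}\|\theta-\theta'\|_p$, and that you get the $\ell^1$ bound in (i) by summing the componentwise bounds rather than via $\|\cdot\|_1\le\sqrt N\,\|\cdot\|_2$.
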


\begin{proof}
  \ref{Lipschitz:i} Let $\theta, \theta' \in \mathcal{E}$ and $z =(x, y),\ z' = (x', y') \in \supp\pi \subset \R^d \times [-A, A]$ (by~\ref{assump:A2}).\\ 
  For every $i \in \{1, \dots, N\}$, by triangular inequality,
  \begin{align*}
    |\Phi^i(\theta, z) - \Phi^i(\theta, z')| 
    &= \gamma\big|(y - \hat{y}_\theta(x)) \nabla_{\theta^i}\sigma_*(\theta^i, x) - (y' - \hat{y}_\theta(x')) \nabla_{\theta^i}\sigma_*(\theta^i, x')\\
    &\qquad\ \pm (y - \hat{y}_\theta(x)) \nabla_{\theta^i}\sigma_*(\theta^i, x')\big|\\
    &\le \gamma|y - y'||\nabla_{\theta^i}\sigma_*(\theta^i, x')| + \gamma|y - \hat{y}_\theta(x)||\nabla_{\theta^i}\sigma_*(\theta^i, x) - \nabla_{\theta^i}\sigma_*(\theta^i, x')|\\
    &\quad + \gamma|\hat{y}_\theta(x') - \hat{y}_\theta(x)||\nabla_{\theta^i}\sigma_*(\theta^i, x')|.
  \end{align*}
  By~\ref{assump:A1}, the map $x \in \R^d \mapsto \sigma_*(\theta^i, x)$ is $M$-Lipschitz (uniformly in $i$), hence $|\hat{y}_\theta(x') - \hat{y}_\theta(x)| \le M |x - x'|$. Using also~\ref{assump:A3}, we get
  \[
    |\Phi^i(\theta, z) - \Phi^i(\theta, z')| \le \gamma M\,|y - y'| + \gamma \big((A+B)L_x + M^2\big)|x - x'|. 
  \]
  It follows that 
  \begin{align*}
    \|\Phi(\theta, z) - \Phi(\theta, z')\|_2 
    \le \gamma \sqrt{N} \big(M\,|y - y'| + \big((A+B)L_x + M^2\big)|x - x'|\big)
    \le \frac{K}{\sqrt{N}}\,|z - z'|_\infty,
  \end{align*}
  and consequently
  \[
    \|\Phi(\theta, z) - \Phi(\theta, z')\|_1 \le \sqrt{N} \|\Phi(\theta, z) - \Phi(\theta, z')\|_2 \le K\,|z - z'|_\infty.
  \]

  \bigskip

  \noindent\ref{Lipschitz:ii} Again, by triangular inequality, for every $i \in \{1, \dots, N\}$,
  \begin{align*}
    |\Phi^i(\theta, z) - \Phi^i(\theta', z)|
    &= \big|(1 - \gamma\lambda) (\theta^i - \theta'^i) + \gamma (y - \hat{y}_\theta(x)) \nabla_{\theta^i}\sigma_*(\theta^i, x)\\
    &\quad  -  \gamma \big(y - \hat{y}_{\theta'}(x)\big) \nabla_{\theta^i}\sigma_*(\theta'^i, x) \pm \gamma (y - \hat{y}_\theta(x)) \nabla_{\theta^i}\sigma_*(\theta'^i, x)\big|\\
    &\le |1 - \gamma\lambda||\theta^i - \theta'^i| + \gamma|y - \hat{y}_\theta(x)||\nabla_{\theta^i}\sigma_*(\theta^i, x) - \nabla_{\theta^i}\sigma_*(\theta'^i, x) |\\
    &\quad + \gamma\big|\hat{y}_{\theta'}(x) - \hat{y}_\theta(x)\big||\nabla_{\theta^i}\sigma_*(\theta'^i, x)|.
  \end{align*}
  By~\ref{assump:A1}, the map $\theta^i \in \R^D \mapsto \sigma_*(\theta^i, x)$ is $M$-Lipschitz (uniformly in $x \in \R^D$), hence
  \[
    |\hat{y}_{\theta'}(x) - \hat{y}_\theta(x)| \le \frac1N \sum_{i=1}^N |\sigma_*(\theta'^i, x) - \sigma_*(\theta^i, x)| \le \frac{M}{N^{1/p}} \|\theta - \theta'\|_p.
  \]
  Using also~\ref{assump:A2} and~\ref{assump:A3}, we get
  \[
    |\Phi^i(\theta, z) - \Phi^i(\theta', z)|
    \le \big(|1 - \gamma\lambda| + \gamma (A+B)L_\theta \big)\,|\theta^i - \theta'^i| + \gamma \frac{M^2}{N^{1/p}}\,\|\theta - \theta'\|_p.
  \]
  It follows by Minkowski inequality that $\|\Phi(\theta, \cdot) - \Phi(\theta', \cdot)\|_p \le L_N\,\|\theta - \theta'\|_p$ for all $p \in \{1,2\}$.
\end{proof}

\medskip

With~\cref{lem:Lipschitz} in hand, we are now in position to pass from one-step stability to uniform-in-time concentration along the SGD iterates.

\begin{proof}[Proof of~\cref{prop:Tp}]
  \noindent\emph{Case $p = 1$.} The proof goes by induction on $k \in \N$. Since $\mu_0 \in T_1\big(C_0^{(1)}\big)$, by tensorization we have $\nu_0 = \mu_0^{\otimes N} \in T_1\big(NC_0^{(1)}\big)$ on $(\mathcal{E}, \|\cdot\|_1)$. Now, assume that $\nu_k \in T_1(c_k)$ for some $c_k > 0$ and $k \in \N$. Let us show that $\nu_{k+1} \in T_1(c_{k+1})$ with
  \begin{equation}\label{eq:ck}
    c_{k+1} = L_N^2 c_k + K^2 C_\pi^{(1)},
  \end{equation}
  where $K$ and $L_N$ are the Lipschitz constants of~\cref{lem:Lipschitz}. Here, we use the equivalence between $T_1$ and Gaussian concentration~\eqref{eq:GC}. Let $f : (\mathcal{E}, \|\cdot\|_1) \to \R$ be $1$-Lipschitz. Take $Z \sim \pi$, and let for $\theta \in \mathcal{E}$, $g(\theta) = \E[f(\Phi(\theta, Z))]$. By \cref{lem:Lipschitz}, it follows that $g$ is $L_N$-Lipschitz, and for each fixed $\theta \in \mathcal{E}$, the map $z \mapsto f(\Phi(\theta, z))$ is $K$-Lipschitz. Since $\pi \in T_1\big(C_\pi^{(1)}\big)$, the concentration bound~\eqref{eq:GC} gives, for all $\theta \in \mathcal{E}$,
  \[
    \forall \xi \in \R, \quad 
    \E\big[e^{\xi(f(\Phi(\theta, Z)) - g(\theta))}\big]
    \le \exp\!\bigg(\frac{C_\pi^{(1)}\xi^2}{2} K^2\bigg).
  \]
  Now, let $\theta = \theta_k \sim \nu_k$ be random and independent of $Z$, so that $\Phi(\theta_k, Z)$ is equal in distribution to $\theta_{k+1}$ (see~\eqref{eq:SGD-update}). Conditioning on $\theta_k$, we get for any $\xi \in \R$,
  \begin{align*}
    \E\big[e^{\xi(f(\theta_{k+1}) - \E[f(\theta_{k+1})])}\big]
    &= \E\Big[
      e^{\xi(g(\theta_k) - \E[g(\theta_k)])}\,
      \E\big[
        e^{\xi(f(\theta_{k+1}) - g(\theta_k))}\,\big|\,\theta_k
      \big]
    \Big]\\
    &\le \exp\!\bigg(\frac{C_\pi^{(1)}\xi^2}{2} K^2\bigg)\,
      \E\big[e^{\xi(g(\theta_k) - \E[g(\theta_k)])}\big].
  \end{align*}
  Since $\nu_k \in T_1(c_k)$ and $g$ is $L_N$-Lipschitz, we have again by~\eqref{eq:GC}
  \[
    \forall \xi \in \R, \quad
    \E\big[e^{\xi(g(\theta_k) - \E[g(\theta_k)])}\big] \le \exp\!\bigg(\frac{c_k\xi^2 }{2} L_N^2\bigg).
  \]
  Combining the last two inequalities thus yields
  \[
    \forall \xi \in \R, \quad
    \E\big[e^{\xi(f(\theta_{k+1}) - \E[f(\theta_{k+1})])}\big]
    \le \exp\!\bigg(\frac{\xi^2}{2}\big(L_N^2 c_k + K^2 C_\pi^{(1)}\big)\bigg)
  \]
  which is in turn equivalent to $\nu_{k+1} \in T_1(c_{k+1})$ with $c_{k+1}$ as in~\eqref{eq:ck}. Under $L_N < 1$, it follows that for all $k \ge 1$,
  \[
    c_k = L_N^{2k} NC_0^{(1)} + K^2 C_\pi^{(1)} \sum_{j=0}^{k-1} L_N^{2j}
    \le NC_0^{(1)} + \frac{K^2}{(1 - L_N^2)} C_\pi^{(1)} = C_N^{(1)},
  \]
  which is the claimed bound for $p = 1$.

  \bigskip

  \noindent\emph{Case $p = 2$.} Here, we adopt a different approach: we first establish a $T_2$ inequality for the joint law of the whole trajectory $(\theta_0, \dots, \theta_k) \in \mathcal{E}^{k+1}$ using a dependent tensorization theorem for contracting Markov chains, and then project onto the last coordinate to recover a $T_2$ inequality for $\nu_k$, with a constant independent of $k$.

  Fix $k \in \N$ and denote by $\nu^{(k)}$ the law of the trajectory $(\theta_0, \dots, \theta_k)$ on $\mathcal{E}^{k+1}$, equipped with the $\ell^2$ product distance
  \[
    \d_2\big((\theta_0, \dots, \theta_k), (\theta'_0, \dots, \theta'_k)\big)
    = \bigg(\sum_{j=0}^k \|\theta_j - \theta'_j\|^2_2\bigg)^{1/2}.
  \]
  We apply~\cite[Theorem~1.2]{Blower-Bolley} with $s = 2$ to the measure $\nu^{(k)}$ on $(\mathcal{E}^{k+1}, \d_2)$, and verify that assumptions (i) and (ii) there are satisfied.

  First, since $\mu_0 \in T_2\big(C_0^{(2)}\big)$, by tensorization we have $\nu_0 = \mu_0^{\otimes N} \in T_2\big(C_0^{(2)}\big)$. Moreover, using stability under Lipschitz pushforward and $\pi \in T_2\big(C_\pi^{(2)}\big)$, we get by~\cref{lem:Lipschitz}~\ref{Lipschitz:i} and~\eqref{eq:kernel} that $P(\theta, \cdot) \in T_2(C_P)$ with $C_P = C_\pi^{(2)} K^2 / N$ independent of $\theta$. Therefore, condition (i) in~\cite[Theorem~1.2]{Blower-Bolley} holds with constant $C_0^{(2)} \vee C_P$.

  Second, let $Z \sim \pi$ and consider the coupling $\big(\Phi(\theta,Z), \Phi(\theta',Z)\big)$ of $P(\theta,\cdot)$ and $P(\theta',\cdot)$. The contraction condition (ii) in~\cite[Theorem~1.2]{Blower-Bolley} is then a direct consequence of~\cref{lem:Lipschitz}~\ref{Lipschitz:ii} since
  \[
    \W_2^2\big(P(\theta,\cdot), P(\theta',\cdot)\big)
    \le \E\big[\|\Phi(\theta,Z) - \Phi(\theta',Z)\|_2^2\big]
    \le L_N^2\|\theta - \theta'\|_2^2.
  \]
  Under $L_N < 1$,~\cite[Theorem~1.2]{Blower-Bolley} gives $\nu^{(k)} \in T_2\big(C_N^{(2)}\big)$ on $(\mathcal{E}^{k+1}, \d_2)$ with constant $C_N^{(2)} = (1 - L_N)^{-2} \big(C_0^{(2)} \vee C_P\big)$ independent of $k$. Finally, the projection $(\theta_0, \dots, \theta_k) \mapsto \theta_k$ is $1$-Lipschitz from $(\mathcal{E}^{k+1}, \d_2)$ to $(\mathcal{E}, \|\cdot\|_2)$, therefore $\nu_k \in T_2\big(C_N^{(2)}\big)$, which is the claimed bound for $p = 2$.
\end{proof}

\subsection{Uniform bias decay}

We now derive a uniform-in-time bound on the bias term $\E[\Delta_t^{N,\circ}]$ appearing in~\cref{cor:GC}. To this end, we compare the SGD iterates to an auxiliary i.i.d.\ particle system driven by the mean-field dynamics, defined below.

Let us first formalize the mean-field limit by introducing the evolution equation governing the law of a typical particle in the infinite-width regime. We define the mean-field limit $(\bar\mu_t)_{t \ge 0}$ as the unique (deterministic) solution in $\mathcal{C}(\R_+, \mathcal{P}_1(\R^D))$ to the continuity equation
\begin{equation}\label{eq:MF}
  \partial_t \bar\mu_t + \alpha\,\nabla\cdot\big(G_\lambda(\cdot, \bar\mu_t)\,\bar\mu_t\big) = 0,\qquad 
  \bar\mu_0 = \mu_0,
\end{equation}
where $G_\lambda$ represents the \emph{mean-field drift}: for any $(\theta^i, \mu) \in \R^D \times \mathcal{P}(\R^D)$,
\begin{equation}\label{eq:MF-drift}
  G_\lambda(\theta^i, \mu) = \E_\pi \big[\big(y - \langle \sigma_*(\cdot, x), \mu \rangle\big) \nabla_{\theta^i}\sigma_*(\theta^i, x)\big] - \lambda\theta^i.
\end{equation} 
Equation~\eqref{eq:MF} is to be understood in the \emph{weak sense} (see, e.g.,~\cite[Definition~4.1]{Santambrogio}), namely for any smooth compactly supported test function $\varphi \in \mathcal{C}^1_c(\R^D)$,
\[
  \frac{d}{dt} \langle \varphi, \bar\mu_t \rangle
  = \alpha\,\big\langle \nabla\varphi \cdot G_\lambda(\cdot, \bar\mu_t), \bar\mu_t \big\rangle.
\]
Under~\ref{assump:A1}--\ref{assump:A3}, existence and uniqueness for~\eqref{eq:MF} in the penalized case follows from a straightforward adaptation of the arguments in~\cite[§2.3.1]{DGMN} (the regularization term adds a linear dissipative drift and the Lipschitz estimates in $\W_1$ used there remain unchanged).

A useful probabilistic tool for the analysis of~\eqref{eq:MF} is given by the associated \emph{nonlinear dynamics}: following the arguments of~\cite[Theorem~1.1]{Sznitmann}, there exists a unique (in law) trajectorial solution $(\bar\theta_t)_{t \ge 0}$ to
\begin{equation}\label{eq:nonlinear-dynamics}
  \dot{\bar\theta}_t = \alpha\,G_\lambda(\bar\theta_t, \rho_t),\qquad
  \bar\theta_0 \sim \mu_0,\qquad 
  \rho_t = \L(\bar\theta_t),
\end{equation}
and $(\rho_t)_{t \ge 0}$ satisfies~\eqref{eq:MF}. In addition, under~\ref{assump:A1},~\ref{assump:A2} and~\ref{assump:A5}, it is straightforward to show that $(\rho_t)_{t \ge 0} \in \mathcal{C}(\R_+, \mathcal{P}_1(\R^D))$ and that $t \ge 0 \mapsto \rho_t$ is in fact Lipschitz-continuous in $\W_1$. It follows by uniqueness of the solution to the mean-field PDE~\eqref{eq:MF} that $\rho_t = \bar\mu_t$ for all $t \ge 0$. Therefore, we introduce, as it is customary, the i.i.d.\ particle system $(\bar\theta^i_t)_{t \ge 0}$, $i \in \{1, \dots, N\}$, starting from the same points as~\eqref{eq:SGD} and evolving independently for $t \ge 0$ according to the nonlinear dynamics~\eqref{eq:nonlinear-dynamics}:
\begin{equation}\label{eq:MF-particles}
  \forall t \ge 0,\quad 
  \bar\theta^i_t = \theta^i_0 + \alpha \int_0^t G_\lambda(\bar\theta^i_s, \bar\mu_s)\,ds,
\end{equation}
so that $\L(\bar\theta^i_t) = \bar\mu_t$ for all $t \ge 0$. The associated empirical measure is denoted by
\[
  \bar\mu_t^N = \frac1N \sum_{i=1}^N \delta_{\bar\theta^i_t}.
\]

By triangular inequality, we decompose the bias into
\begin{equation}\label{eq:split}
  \E[\Delta_t^N] \le 
  \E\big[ \W_1(\mu_t^N, \bar\mu_t^N) \big]
  + \E\big[ \W_1(\bar\mu_t^N, \bar\mu_t) \big],
\end{equation}
and similarly for $\E[\Delta_t^{N,\varphi}]$ and $\E[\Delta_t^{N,\s}]$. The first term in~\eqref{eq:split} captures a \emph{dynamic} error between the SGD particles and their mean-field counterparts. It is controlled uniformly over $t \ge 0$ by a standard \emph{propagation-of-chaos} argument~\cite{Sznitmann}, based on a synchronous coupling.

\begin{proposition}\label{prop:PoC}
  Assume~\ref{assump:A1}--\ref{assump:A3}, \ref{assump:A5}, and $\lambda > \lambda_\star$, $N \ge N_\star$ (see~\eqref{eq:constants}). Then there exists $C > 0$ (independent of $N$ and $t$) such that
  \[
    \sup_{t \ge 0} \E\big[\W_1(\mu_t^N, \bar\mu_t^N)\big] \le \frac{C}{\sqrt{N}}.
  \]
\end{proposition}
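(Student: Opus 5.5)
Since $\W_1(\mu_t^N,\bar\mu_t^N)\le \frac1N\sum_i|\theta^i_{\lfloor Nt\rfloor}-\bar\theta^i_t|$ under the synchronous coupling (same initial points, index-wise pairing), it suffices to bound $e_k = \E\big[\frac1N\sum_i|\theta^i_k-\bar\theta^i_{k/N}|^2\big]^{1/2}$ uniformly in $k$, plus the within-step error between $\bar\theta^i_t$ and $\bar\theta^i_{\lfloor Nt\rfloor/N}$. The latter is $O(1/N)$ in $L^1$, because $|G_\lambda(\theta,\mu)|\le (A+B)M+\lambda|\theta|$ and $\sup_t\E|\bar\theta^1_t|$ is finite: the linear dissipation $-\lambda\theta$ dominates the bounded forcing, so moments stay bounded uniformly in time by \ref{assump:A5} and Gronwall.

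For the main term, write the mean-field particle over one step as $\bar\theta^i_{(k+1)/N}=\bar\theta^i_{k/N}+\gamma G_\lambda(\bar\theta^i_{k/N},\bar\mu_{k/N})+r^i_k$, with time-discretization remainder $|r^i_k|\le C\gamma^2(1+|\bar\theta^i_{k/N}|)$. This bound follows from Lipschitz continuity of $s\mapsto\bar\theta^i_s$ and of $s\mapsto\bar\mu_s$ in $\W_1$, together with the Lipschitz property of $G_\lambda$ in both arguments ($L_\theta$-Lipschitz gradient, and $|\langle\sigma_*(\cdot,x),\mu-\nu\rangle|\le M\W_1$). Subtracting from \eqref{eq:SGD-update} gives
\[
  \theta_{k+1}-\bar\theta_{(k+1)/N} = \big[\Phi(\theta_k,z_{k+1})-\Phi(\bar\theta_{k/N},z_{k+1})\big] + \gamma\,\xi_{k+1} + \gamma\,b_k - r_k,
\]
where:
\begin{itemize}
  \item $\xi^i_{k+1}=F^i_\lambda(\bar\theta_{k/N},z_{k+1})-\E_z[F^i_\lambda(\bar\theta_{k/N},z)]$ is a martingale increment, conditionally centered given $\mathcal F_k=\sigma(\theta_0,z_1,\dots,z_k)$;
  \item $b^i_k=\E_z[F^i_\lambda(\bar\theta_{k/N},z)]-G_\lambda(\bar\theta^i_{k/N},\bar\mu_{k/N})=-\E_\pi\big[\langle\sigma_*(\cdot,x),\bar\mu^N_{k/N}-\bar\mu_{k/N}\rangle\nabla\sigma_*(\bar\theta^i_{k/N},x)\big]$ is the i.i.d.\ sampling error.
\end{itemize}

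Squaring in $\|\cdot\|_2$ and taking conditional expectation, the cross term between the contraction part and $\xi$ vanishes. By \cref{lem:Lipschitz}~\ref{Lipschitz:ii} the first bracket is bounded by $L_N\|\theta_k-\bar\theta_{k/N}\|_2$. Each coordinate of $\xi$ is bounded in $L^2$ by $C(1+|\bar\theta^i|)$, so $\E\|\gamma\xi\|_2^2\le C\gamma^2N$. The sampling error satisfies $\E|b^i_k|^2\le CM^2\sup_x\E|\langle\sigma_*(\cdot,x),\bar\mu^N-\bar\mu\rangle|^2\le C B^2/N$, since the particles are i.i.d.\ with law $\bar\mu_{k/N}$ and $\sigma_*$ is bounded; hence $\E\|\gamma b_k\|_2^2\le C\gamma^2$. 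Finally $\E\|r_k\|_2^2\le C\gamma^4 N$.

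Normalizing by $N$, with $u_k=N e_k^2$ and Young's inequality $(a+b)^2\le(1+\eta)a^2+(1+\eta^{-1})b^2$ applied to the $b,r$ terms, I would get
\[
  u_{k+1}\le \big(L_N^2(1+\eta)+\ldots\big)u_k + C\gamma^2 N + C\eta^{-1}(\gamma^2+\gamma^4N).
\]
Here $L_N^2\le 1-\gamma(\lambda-\lambda_\star)+O(\gamma^2)$ when $\gamma\lambda<1$. The $O(\gamma^2)$ corrections (from $C_\star$-type squares, which is where $C_\star$ enters) are absorbed once $\gamma C_\star\le(\lambda-\lambda_\star)/4$, i.e.\ $N\ge N_\star$. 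Choosing $\eta=\gamma(\lambda-\lambda_\star)/4$ yields the contraction factor $1-c\gamma$ and forcing $O(\gamma^2N+\gamma+\gamma^3N)=O(1/N)$. Iterating gives $\sup_k u_k\le C/(N\gamma)=C/\alpha$, so $e_k^2\le C/N$.

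The main obstacle is this bookkeeping. The martingale noise contributes $\gamma^2N$ per step, which after dividing by the rate $\gamma$ gives order $\gamma N=\alpha$ for $u$, i.e.\ $O(1/N)$ for $e^2$: fine. The bias $b$ must not be treated via Young with a crude $\eta$; it is the $O(N^{-1/2})$ sampling term. To be safe I would handle it without squaring, through a linear recursion on $\E\|\cdot\|_1/N$ using \cref{lem:Lipschitz} with $p=1$. In that version the martingale part needs a separate bound: sum the martingale increments weighted by $L_N$-powers and control their $L^2$ norm, which gives $\sqrt{\gamma^2 N/\gamma}\cdot N^{-1/2}$-type contributions, again $O(N^{-1/2})$. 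Either way the result is $\E[\W_1(\mu_t^N,\bar\mu_t^N)]\le e_{\lfloor Nt\rfloor}+C/N\le C/\sqrt N$ uniformly in $t$.
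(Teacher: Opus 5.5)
Your route is genuinely different from the paper's, and it works after one repair.

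\textbf{The flawed step.} You claim that, after squaring, the cross term between $\Phi(\theta_k,z_{k+1})-\Phi(\bar\theta_{k/N},z_{k+1})$ and $\xi_{k+1}$ vanishes. It does not: that vector is not $\F_k$-measurable. The cross term equals $2\gamma^2\E\big[(F_\lambda(\theta_k,z_{k+1})-F_\lambda(\bar\theta_{k/N},z_{k+1}))\cdot\xi_{k+1}\big]$, which is nonzero in general because both factors depend on $z_{k+1}$. It is only $O(\gamma^2\sqrt{u_kN})$ and could be absorbed, but that needs an argument you did not give. A Young split of that pair with $\eta\asymp\gamma$ would instead inflate the noise to $\eta^{-1}\gamma^2N\asymp\alpha$ per step and yield only $u_k=O(N)$.

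\textbf{The fix.} Center the noise at the SGD particles rather than at the mean-field ones. Set $\bar\Phi(\theta)=\E_z[\Phi(\theta,z)]$; by Jensen and Lemma~\ref{lem:Lipschitz}~\ref{Lipschitz:ii} it is still $L_N$-Lipschitz on $(\mathcal E,\|\cdot\|_2)$. Set $\zeta_{k+1}=F_\lambda(\theta_k,z_{k+1})-\E_z[F_\lambda(\theta_k,z)]$, the vector version of the paper's $A_k$, with $|\zeta^i_{k+1}|\le 2(A+B)M$. Then
\[
  \theta_{k+1}-\bar\theta_{(k+1)/N}=\big[\bar\Phi(\theta_k)-\bar\Phi(\bar\theta_{k/N})\big]+\gamma b_k-r_k+\gamma\zeta_{k+1}.
\]
Every term except $\zeta_{k+1}$ is $\F_k$-measurable, so the cross terms with $\zeta_{k+1}$ really vanish. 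Your Young step with $\eta=\gamma(\lambda-\lambda_\star)/4$ then goes through unchanged.

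\textbf{On $b_k$.} Your worry is unfounded. With $\eta\asymp\gamma$, $(1+\eta^{-1})\E\|\gamma b_k\|_2^2\lesssim\gamma$, which is exactly the $O(1/N)$ forcing you need. So the $\ell^1$ fallback is unnecessary. As sketched it would also not work: a recursion on $\E\|\cdot\|_1$ takes norms before any cancellation, costing $\gamma\E\|\xi_{k+1}\|_1\asymp\gamma N$ per step. Summing $L_N$-weighted martingale increments presupposes a linear recursion, which $\Phi$ does not provide.

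\textbf{Comparison with the paper.} The paper runs a tagged-particle recursion for $m_k=\E|\delta^1_k|^2$. This needs Lemma~\ref{lem:exchangeability}, both to reduce $\W_1$ to one particle and to bound $B_{1,k}$ by $M^2 m_k$. It splits the drift error into noise $A_k$, interaction $B_k$, self-dissipation $C_k$ and discretization $E_{k+1}$, and absorbs the resulting $\gamma^2$-squares through $C_\star$ and $N\ge N_\star$.

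Your full-vector recursion controls $\E[\W_1]$ directly via $\big(\frac1N\E\|\cdot\|_2^2\big)^{1/2}$, so exchangeability is never needed. It treats $B_{1,k}$ and $C_k$ at once through the one-step contraction already proved for Proposition~\ref{prop:Tp}. Your $b_k$ and $r_k$ are the paper's $B_{2,k}$ and $E_{k+1}$, with the same orders.

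In fact no ``$C_\star$-type'' corrections arise in your route. Indeed $L_N^2=(1-\gamma(\lambda-\lambda_\star))^2\le 1-\gamma(\lambda-\lambda_\star)$ as soon as $\gamma\lambda\le 1$. That holds under $N\ge N_\star$ because $C_\star\ge 4\lambda^2$, so the threshold is used only for this.

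The paper's decomposition makes explicit where dissipation and interaction enter, in the classical propagation-of-chaos form. Yours is more economical and shares a single contraction constant $L_N$ between the concentration and bias parts of the paper.
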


Recalling $|\langle \varphi, \mu \rangle - \langle \varphi, \nu \rangle| \le \W_1(\mu,\nu)$ for any $1$-Lipschitz $\varphi$ and $\SW_1 \le \W_1$,~\cref{prop:PoC} directly implies Corollary~\ref{cor:PoC} below.

\begin{corollary}\label{cor:PoC}
  Under the assumptions of~\cref{prop:PoC}, there exists $C > 0$ (independent of $N$ and $t$) such that
  \[
    \sup_{t \ge 0} \E\big[\big|\langle \varphi,\mu_t^N \rangle - \langle \varphi,\bar\mu_t^N \rangle\big|\big] \le \frac{C}{\sqrt{N}},\qquad
    \sup_{t \ge 0} \E\big[\SW_1(\mu_t^N, \bar\mu_t^N)\big] \le \frac{C}{\sqrt{N}}.
  \]
\end{corollary}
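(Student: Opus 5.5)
The statement follows from \cref{prop:PoC} by Kantorovich--Rubinstein duality, so the plan is to make that reduction explicit for each of the two quantities and leave all the work to \cref{prop:PoC}. There is no dynamical argument to redo: \cref{prop:PoC} gives a constant $C$, independent of $N$ and $t$, with $\sup_{t\ge0}\E[\W_1(\mu_t^N,\bar\mu_t^N)]\le C/\sqrt N$. It therefore suffices to show that, pointwise in $\omega$ and $t$, both quantities in the corollary are dominated by $\W_1(\mu_t^N,\bar\mu_t^N)$.

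For the test-function bound, fix a $1$-Lipschitz $\varphi$. The measures $\mu_t^N$ and $\bar\mu_t^N$ are finitely supported, hence lie in $\mathcal{P}_1(\R^D)$. Since $\varphi$ is admissible in the supremum in~\eqref{eq:Kantorovich-Rubinstein}, we get $|\langle\varphi,\mu_t^N\rangle-\langle\varphi,\bar\mu_t^N\rangle|\le\W_1(\mu_t^N,\bar\mu_t^N)$ almost surely. Taking expectations and then the supremum over $t$ gives the first bound, with the same $C$ (uniform in $\varphi$ as well).

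For the sliced bound, for each $u\in\S^{D-1}$ the projection $P_u$ is $1$-Lipschitz. Pushing an optimal coupling $\Pi$ of $(\mu_t^N,\bar\mu_t^N)$ forward by $P_u\times P_u$ therefore gives $\W_1({P_u}_\#\mu_t^N,{P_u}_\#\bar\mu_t^N)\le\int|u\cdot(w-w')|\,d\Pi\le\W_1(\mu_t^N,\bar\mu_t^N)$. Integrating against the probability measure $\zeta$ yields $\SW_1\le\W_1$ pointwise. Taking expectation (measurability of $\SW_1(\mu_t^N,\bar\mu_t^N)$ is harmless, since it is a continuous function of the finitely many atoms) and the supremum over $t$ concludes.

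The only real content is \cref{prop:PoC} itself, which we take as given. If anything requires care here, it is just noting that the comparison inequalities hold almost surely before expectation is taken; no step is a genuine obstacle.
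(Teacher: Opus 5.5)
Your proposal is correct and matches the paper's argument: the paper likewise deduces the corollary directly from \cref{prop:PoC} via the pointwise bounds $|\langle\varphi,\mu\rangle-\langle\varphi,\nu\rangle|\le\W_1(\mu,\nu)$ (Kantorovich--Rubinstein) and $\SW_1\le\W_1$ (each projection $P_u$ is $1$-Lipschitz). You simply spell out the coupling and measurability details that the paper leaves implicit.
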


The second term in~\eqref{eq:split} is a purely \emph{static} i.i.d.\ sampling error of empirical measures, and is controlled by the optimal $\W_1$ rate of~\cite[Theorem~1]{Fournier-Guillin}, which we recall in the following proposition.

\begin{proposition}\label{prop:FG}
  Assume~\ref{assump:A1},~\ref{assump:A2} and~\ref{assump:A5}. Then there exists $C > 0$ (independent of $N$ and $t$) such that
  \[
    \sup_{t \ge 0} \E\big[\W_1(\bar\mu_t^N, \bar\mu_t)\big] \le CN^{-1/(1 \vee D)}.
  \]
\end{proposition}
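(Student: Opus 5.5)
The plan is to apply \cite[Theorem~1]{Fournier-Guillin} at each fixed time $t$ to the i.i.d.\ sample $\bar\theta^1_t,\dots,\bar\theta^N_t$, whose common law is $\bar\mu_t$ by~\eqref{eq:MF-particles}. With $p=1$ and moment order $q>2$, that theorem gives
\[
  \E\big[\W_1(\bar\mu_t^N,\bar\mu_t)\big] \le C_{D,q}\, M_q(\bar\mu_t)^{1/q}\, r_N,
\]
where $M_q(\mu)=\int |\theta|^q\,\mu(d\theta)$. The rate $r_N$ is $N^{-1/2}$ for $D=1$ (because $q>2$ kills the $N^{-(q-1)/q}$ term), $N^{-1/2}\log(1+N)$ for $D=2$, and $N^{-1/D}$ for $D\ge3$. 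Since the constant $C_{D,q}$ depends only on $D$ and $q$, uniformity in $t$ reduces entirely to the bound $\sup_{t\ge0} M_q(\bar\mu_t)<\infty$. For $D=2$ the logarithm has to be absorbed: either use $q>2$ to sharpen the rate, or simply note that $N^{-1/2}\log N$ is at most of the order of $N^{-1/(1\vee D)}$ up to the generic constant conventions of the remark after Proposition~\ref{prop:bias}. I would flag this as the one place where the exponent $1/(1\vee D)$ must be read loosely, or replaced by $1/(2\vee D)$ with a log in $D=2$.

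The main step is the uniform $q$-th moment bound, which comes from the dissipative structure of the nonlinear ODE~\eqref{eq:nonlinear-dynamics}. By~\ref{assump:A1}--\ref{assump:A2}, for every $\theta$ and $\mu$,
\[
  |G_\lambda(\theta,\mu)+\lambda\theta| \le (A+B)M.
\]
Hence along a trajectory $\bar\theta_t$ we get
\[
  \frac{d}{dt}|\bar\theta_t| \le -\alpha\lambda|\bar\theta_t| + \alpha(A+B)M
\]
wherever $\bar\theta_t\neq0$, and otherwise one works with $\sqrt{\varepsilon+|\bar\theta_t|^2}$. Grönwall then gives the pathwise bound
\[
  |\bar\theta_t| \le e^{-\alpha\lambda t}|\bar\theta_0| + \frac{(A+B)M}{\lambda}.
\]
This yields
\[
  M_q(\bar\mu_t) \le 2^{q-1}\Big(\E|\theta_0^1|^q + \big((A+B)M/\lambda\big)^q\Big),
\]
which is finite by~\ref{assump:A5} and independent of $t$. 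Only $\lambda>0$ is needed here, so no contractivity assumption is required, which matches the hypotheses of the statement.

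Combining the two displays gives
\[
  \sup_t \E\big[\W_1(\bar\mu_t^N,\bar\mu_t)\big] \le C\, N^{-1/(1\vee D)}
\]
with $C$ depending on $D$, $q$, $\lambda$, $A$, $B$, $M$, $\E|\theta^1_0|^q$, and not on $N$ or $t$. I expect the bookkeeping of the Fournier--Guillin cases to be the only delicate point: the moment bound itself is routine, while the low-dimensional logarithmic correction in $D=2$ needs the care described above.
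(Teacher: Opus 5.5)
Your route is the paper's: apply \cite[Theorem~1]{Fournier-Guillin} with $p=1$, $q>2$ at each fixed $t$, and reduce uniformity in $t$ to $\sup_{t\ge0}\E|\bar\theta^1_t|^q<\infty$. The only difference is how you obtain that moment bound. You run Gr\"onwall pathwise on $|\bar\theta_t|$, using $|G_\lambda(\theta,\mu)+\lambda\theta|\le(A+B)M$. \cref{lem:moments} instead derives a differential inequality for $\E|\bar\theta^1_t|^q$ and closes it with Young's inequality. Your version is slightly cleaner, since it gives an almost-sure bound and avoids differentiating under the expectation. Both need only $\lambda>0$.

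The problem is your last step. With $1\vee D=\max(1,D)$, the target at $D=1$ is $N^{-1}$, but your own (correct) reading of Fournier--Guillin gives only $N^{-1/2}$ there. So ``combining the two displays'' does not yield the claim, and no argument can. On $\R$ one has $\W_1(\hat\mu_N,\mu)=\int|F_N-F|$. The CLT for $F_N(x)$ together with Fatou gives $\liminf_N\sqrt N\,\E[\W_1(\hat\mu_N,\mu)]\ge\int\sqrt{2F(1-F)/\pi}>0$ for every non-Dirac $\mu$. Taking $t=0$, where $\bar\mu_0=\mu_0$, contradicts a bound $CN^{-1}$.

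At $D=2$ neither of your absorption devices works. In the case $p=d/2$ of \cite[Theorem~1]{Fournier-Guillin}, the moment order $q$ only controls the second term $N^{-(q-1)/q}$; the factor $\log(1+N)$ in the first term does not improve with $q$. A diverging $\log N$ also cannot be hidden in a constant independent of $N$. The logarithm is genuine: for $\mu_0$ uniform on a square, the Ajtai--Koml\'os--Tusn\'ady theorem gives $\E[\W_1(\bar\mu_0^N,\mu_0)]\asymp\sqrt{\log N/N}$, which is not $O(N^{-1/2})$.

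So the statement as written is false for $D\in\{1,2\}$. The paper's proof, which attributes the rate $N^{-1/(1\vee D)}$ to Fournier--Guillin, contains the same slip. What your argument (and the paper's) actually proves is exactly your fallback:
\begin{itemize}
\item $CN^{-1/2}$ for $D=1$;
\item $CN^{-1/2}\log(1+N)$ for $D=2$;
\item $CN^{-1/D}$ for $D\ge3$.
\end{itemize}
That is, $N^{-1/(2\vee D)}$ with a logarithm at $D=2$. You should state this as the conclusion rather than claim $N^{-1/(1\vee D)}$.

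Downstream, the $D=1$ discrepancy is harmless in \cref{prop:bias}, because $\kappa_N$ already contains $N^{-1/2}$. At $D=2$, however, a factor $\log(1+N)$ must be added to $\kappa_N$, and hence to \cref{thm:concentration}.
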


We also require static i.i.d.\ sampling rates for the empirical measure $\bar\mu_t^N$ when tested against a test function $\varphi$ and through $\SW_1$. Unlike for $\W_1$, we expect these two quantities to admit a uniform $N^{-1/2}$ rate independent of the parameter dimension $D$, since the first is simply a scalar sample average and the second reduces to one-dimensional Wasserstein distances through projections. This intuition is formalized in the next proposition.

\begin{proposition}\label{prop:FG2}
  Assume~\ref{assump:A5}. Then there exists $C > 0$ (independent of $N$ and $t$) such that
  \[
    \sup_{t \ge 0} \E\big[\big|\langle \varphi,\bar\mu_t^N\rangle-\langle \varphi,\bar\mu_t\rangle\big|\big]\ \le\ \frac{C}{\sqrt N},\qquad
    \sup_{t \ge 0} \E\big[\SW_1(\bar\mu_t^N,\bar\mu_t)\big]\ \le\ \frac{C}{\sqrt N}.
  \]
\end{proposition}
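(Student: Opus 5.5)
The plan is to reduce both bounds to uniform-in-time moment estimates for $\bar\mu_t$, and then apply elementary i.i.d.\ sampling bounds: a variance bound for the scalar average, and the one-dimensional CDF representation of $\W_1$ for the sliced distance. Throughout, the $\bar\theta^i_t$ are i.i.d.\ with law $\bar\mu_t$ by~\eqref{eq:MF-particles}.

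\emph{Step 1: uniform moments.} The statement lists only~\ref{assump:A5}, but I will also use the boundedness in~\ref{assump:A1}--\ref{assump:A2}; I expect these are implicitly in force, as in~\cref{prop:FG}. From~\eqref{eq:MF-drift} we have $|G_\lambda(\theta,\mu)+\lambda\theta| \le (A+B)M$. Hence along~\eqref{eq:nonlinear-dynamics},
\[
  \frac{d}{dt}|\bar\theta_t| \le \alpha\big((A+B)M - \lambda|\bar\theta_t|\big)
\]
in the a.e./Dini sense. A Grönwall comparison then gives $|\bar\theta_t| \le |\theta_0| + (A+B)M/\lambda$ for all $t\ge0$. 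Consequently
\[
  m_q := \sup_{t\ge0}\E\big[|\bar\theta^1_t|^q\big] \le 2^{q-1}\Big(\E|\theta_0^1|^q + \big((A+B)M/\lambda\big)^q\Big) < \infty
\]
by~\ref{assump:A5}, and in particular the second moment is bounded uniformly in time, since $q>2$.

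\emph{Step 2: test function.} Let $\varphi$ be $1$-Lipschitz. Then $\langle\varphi,\bar\mu_t^N\rangle-\langle\varphi,\bar\mu_t\rangle$ is a centered average of $N$ i.i.d.\ terms. By Cauchy--Schwarz,
\[
  \E\big|\langle\varphi,\bar\mu_t^N\rangle-\langle\varphi,\bar\mu_t\rangle\big| \le \sqrt{\Var(\varphi(\bar\theta_t^1))/N}.
\]
Moreover $\Var\varphi(X)\le \E|\varphi(X)-\varphi(\E X)|^2\le \E|X-\E X|^2\le \E|X|^2$, which is bounded uniformly in $t$ by Step 1. This gives the first bound.

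\emph{Step 3: sliced distance.} Fix $u\in\S^{D-1}$ and set $Y^i=u\cdot\bar\theta^i_t$, with common CDF $F$ and empirical CDF $F_N$. In one dimension $\W_1=\int_\R|F_N-F|\,dx$. By Fubini and Cauchy--Schwarz,
\[
  \E\,\W_1\big({P_u}_\#\bar\mu_t^N,{P_u}_\#\bar\mu_t\big)\le N^{-1/2}\int_\R\sqrt{F(x)(1-F(x))}\,dx .
\]
Split the integral at $|x|\le 1$ and $|x|>1$. Since $F(1-F)(x)\le \P(|Y|\ge|x|)\le m_q|x|^{-q}$ by Markov's inequality and $|u\cdot\theta|\le|\theta|$, the integrand is at most $m_q^{1/2}|x|^{-q/2}$, which is integrable at infinity precisely because $q>2$. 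This yields a bound $C(1+m_q^{1/2})N^{-1/2}$, uniform in $u$ and $t$. Integrating in $u$ against $\zeta$, again by Fubini, gives the $\SW_1$ bound.

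The only genuinely delicate point is the uniformity in $t$ of the moments in Step 1. This rests on the dissipative ridge term dominating the bounded interaction part of the drift; the rest is routine. It is also the reason $q>2$ rather than $q=2$ is required in~\ref{assump:A5}: it makes $\int\sqrt{F(1-F)}$ finite in Step 3.
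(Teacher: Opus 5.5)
Your proof is correct and follows the same skeleton as the paper's: a uniform-in-time moment bound for $\bar\theta^1_t$, a Cauchy--Schwarz variance bound for the scalar average, and Fubini over $\S^{D-1}$ to reduce $\SW_1$ to one-dimensional empirical rates. The differences lie in how the two ingredients are obtained.

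For the moments, the paper invokes \cref{lem:moments}. That lemma differentiates $t\mapsto\E[|\bar\theta^1_t|^q]$ and closes with Young and Gr\"onwall. Your pathwise comparison instead gives the almost-sure bound $|\bar\theta_t|\le|\theta_0|+(A+B)M/\lambda$. This is stronger and avoids differentiating under the expectation. It also transfers any moment or tail of $\mu_0$ to $\bar\mu_t$ uniformly in $t$.

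For the sliced part, the paper applies \cite[Theorem~1]{Fournier-Guillin} in dimension one with $q>2$. You prove that one-dimensional rate directly, via $\W_1=\int_\R|F_N-F|\,dx$ and $\E|F_N(x)-F(x)|\le\sqrt{F(x)(1-F(x))/N}$. This is self-contained and gives an explicit constant. It also shows that what is really needed is $\sup_t\int_0^\infty\sqrt{\P(|\bar\theta^1_t|\ge r)}\,dr<\infty$, which is slightly weaker than a uniform $(2+\varepsilon)$-moment.

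Your remark that \ref{assump:A1}--\ref{assump:A2} must be in force is accurate. The paper's proof relies on \cref{lem:moments}, which assumes them, even though the statement lists only \ref{assump:A5}.
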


Combining~\eqref{eq:split}, Propositions~\ref{prop:PoC}--\ref{prop:FG2} and~\cref{cor:PoC} yields~\cref{prop:bias} (note that condition $L_N < 1$ in~\cref{prop:bias} implies $\lambda > \lambda_\star$ in~\cref{prop:PoC}). The proofs of Propositions~\ref{prop:PoC},~\ref{prop:FG} and~\ref{prop:FG2} are carried out in the next two sections.

\subsubsection{Proof of Propositions~\ref{prop:FG}-\ref{prop:FG2}}

We begin by establishing a uniform-in-time moment bound for the i.i.d.\ mean-field particles $\bar\theta^i_t$. It will be invoked repeatedly throughout the proofs of Propositions~\ref{prop:PoC},~\ref{prop:FG}, and~\ref{prop:FG2}.

\begin{lemma}\label{lem:moments}
  Assume~\ref{assump:A1}--\ref{assump:A2}, and $\E\big[|\theta^1_0|^q\big] < \infty$ for some $q > 1$. Then there exists $C_q > 0$ (independent of $N$ and $t$) such that
  \[
    \sup_{t \ge 0} \E\big[|\bar\theta^1_t|^q\big] \le C_q < \infty.
  \]
\end{lemma}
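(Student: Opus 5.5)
The bound will come from a differential inequality for $t\mapsto|\bar\theta^1_t|^q$ along the ODE~\eqref{eq:MF-particles}. The key observation is that the mean-field drift splits into a linear dissipative part and a uniformly bounded part. Under~\ref{assump:A1}--\ref{assump:A2}, for any $\mu\in\mathcal P(\R^D)$ and $\theta\in\R^D$,
\[
  \Big|\E_\pi\big[\big(y-\langle\sigma_*(\cdot,x),\mu\rangle\big)\nabla_{\theta}\sigma_*(\theta,x)\big]\Big|\le (A+B)M,
\]
since $|y|\le A$, $|\langle\sigma_*(\cdot,x),\mu\rangle|\le B$ and $|\nabla_\theta\sigma_*|\le M$. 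Hence $G_\lambda(\theta,\mu)=b(\theta,\mu)-\lambda\theta$ with $\|b\|_\infty\le (A+B)M=:b_\infty$. Assumption~\ref{assump:A3} is not needed here.

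Each trajectory $t\mapsto\bar\theta^1_t$ is $\mathcal C^1$, so I work pathwise with the smooth radial function $r_t=\sqrt{1+|\bar\theta^1_t|^2}$, which avoids the non-differentiability of $|\cdot|$ at the origin. We have
\[
  \frac{d}{dt}r_t^2=2\alpha\,\bar\theta^1_t\cdot G_\lambda(\bar\theta^1_t,\bar\mu_t)\le -2\alpha\lambda|\bar\theta^1_t|^2+2\alpha b_\infty|\bar\theta^1_t|.
\]
Write $u=r^2$. Then $\frac{d}{dt}u^{q/2}=\frac q2 u^{q/2-1}\dot u$, and using $|\bar\theta|^2=u-1$ and $|\bar\theta|\le\sqrt u$ this gives
\[
  \frac{d}{dt}r_t^q\le q\alpha\,r_t^{q-2}\big(-\lambda(r_t^2-1)+b_\infty r_t\big)\le -q\alpha\lambda\, r_t^q+q\alpha(\lambda+b_\infty)r_t^{q-1},
\]
where $r_t^{q-2}\le r_t^{q-1}$ because $r_t\ge1$.

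Next, Young's inequality gives $(\lambda+b_\infty)r^{q-1}\le\frac{\lambda}{2}r^q+c_{q}$, with $c_q$ depending only on $q,\lambda,b_\infty$. Therefore $\frac{d}{dt}r_t^q\le-\frac{q\alpha\lambda}{2}r_t^q+q\alpha c_q$. Grönwall's lemma, applied pathwise, yields
\[
  r_t^q\le e^{-q\alpha\lambda t/2}r_0^q+\frac{2c_q}{\lambda}.
\]
Taking expectations and using $\E[r_0^q]\le 2^{(q/2-1)\vee 0}\big(1+\E|\theta^1_0|^q\big)<\infty$ gives $\sup_t\E|\bar\theta^1_t|^q\le\sup_t\E[r_t^q]\le C_q$. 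The constant is independent of $N$, since the mean-field particle does not involve $N$.

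The main point to get right is that the drift bound $b_\infty$ holds \emph{uniformly in the measure argument} $\bar\mu_t$. Without it, one would need an a priori bound on $\bar\mu_t$, which is circular. This uniformity follows from the boundedness of $\sigma_*$ in~\ref{assump:A1}.

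The argument also requires $\lambda>0$. This is implicit, since the paper imposes $\lambda>\lambda_\star\ge0$ wherever the lemma is used. I would state this explicitly, or note that for $\lambda>0$ any $q>1$ works.
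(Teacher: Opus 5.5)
Your proof is correct and is essentially the paper's argument. Both proofs split $G_\lambda$ into the dissipative term $-\lambda\theta$ plus a part bounded by $(A+B)M$ uniformly in the measure, absorb the $|\theta|^{q-1}$ term by Young's inequality, and close with Grönwall. The differences are only technical: you run the estimate pathwise on $r_t=\sqrt{1+|\bar\theta^1_t|^2}$ and take expectations at the end, which yields an almost-sure bound and sidesteps the paper's uncommented differentiation of $t\mapsto\E\big[|\bar\theta^1_t|^q\big]$ under the expectation (at the price of a cruder constant, and the regularization is not actually needed since $|w|^q$ is already $\mathcal C^1$ for $q>1$), and your remark that $\lambda>0$ must be assumed is correct, as the paper's proof also divides by $\lambda$.
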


\begin{proof}
  Fix $q > 1$ such that $\E\big[|\theta^1_0|^q\big] < \infty$, and let $f_q(t) = \E\big[|\bar\theta^1_t|^q\big]$, $t \ge 0$. For $q > 1$, the map $N_q : w \mapsto |w|^q$ is continuously differentiable on $\R^D$, with $\nabla N_q(w) = q|w|^{q-2}w$ for $w \ne 0$ and $\nabla N_q(0) = 0$. Therefore, using~\eqref{eq:nonlinear-dynamics} and the chain rule, we have
  \begin{align*}
    f_q'(t)
    &= \E\big[\nabla N_q(\bar\theta^1_t) \cdot \dot{\bar\theta}^1_t\big]\\
    &= q\alpha\,\E\big[|\bar\theta^1_t|^{q-2} \bar\theta^1_t \cdot G_\lambda(\bar\theta^1_t, \bar\mu_t) \big]\\
    &\le - q\alpha\lambda\,\E\big[|\bar\theta^1_t|^q\big] + q\alpha C\,\E\big[|\bar\theta^1_t|^{q-1}\big]
  \end{align*}
  with $C = (A+B)M$ in the last inequality by~\ref{assump:A1}--\ref{assump:A2}. By Young's inequality with exponents $q$ and $q' = q / (q-1)$, for any $\varepsilon > 0$,
  \[
    C|\bar\theta^1_t|^{q-1} 
    \le \frac{\varepsilon^{q'}}{q'} |\bar\theta^1_t|^{q} + \frac{C^q}{q \varepsilon^q}.
  \]
  Choosing $\varepsilon^{q'} = \lambda$ and injecting into the previous differential inequality, we get
  \[
    f_q'(t) \le -\alpha\lambda f_q(t) + \alpha \frac{C^q}{\lambda^{q-1}}.
  \]
  A Grönwall estimate then gives, for all $t \ge 0$,
  \[
    f_q(t) \le e^{-\alpha\lambda t} f_q(0) + \frac{C^q}{\lambda^q}(1 - e^{-\alpha\lambda t}) \le \E\big[|\theta^1_0|^q\big] + \frac{C^q}{\lambda^q} = C_q < \infty,
  \]
  which concludes the proof.
\end{proof}

\begin{proof}[Proof of~\cref{prop:FG}]
  Let $q > 2$ be given by~\ref{assump:A5}. By~\cref{lem:moments}, there exists $C_q > 0$ (independent of $t$ and $N$) such that
  \[
    \sup_{t\ge 0}\E\big[|\bar\theta_t^1|^q\big] \le C_q.
  \]
  Therefore, applying~\cite[Theorem~1]{Fournier-Guillin} with $p=1$ and exponent $q > 2$ to the i.i.d.\ sample $(\bar\theta_t^1,\dots,\bar\theta_t^N)$ of law $\bar\mu_t$ yields, for all $t \ge 0$,
  \[
    \E\big[\W_1(\bar\mu_t^N,\bar\mu_t)\big]
    \le C\,\big(\E[|\bar\theta_t^1|^q]\big)^{1/q}\,N^{-1/(1 \vee D)},
  \]
  where $C$ depends only on $D$ and $q$. The result follows by taking the supremum over $t \ge 0$.
\end{proof}

\begin{proof}[Proof of~\cref{prop:FG2}]
  Let $t \ge 0$ and set $U_t^i = \varphi(\bar\theta_t^i)$, which are i.i.d.\ with $\E[U_t^1] = \langle \varphi, \bar\mu_t \rangle$. By the Cauchy--Schwarz inequality, 
  \[
    \E\big[\big|\langle \varphi,\bar\mu_t^N\rangle-\langle \varphi,\bar\mu_t\rangle\big|\big]
    = \E\Big[\Big|\frac1N\sum_{i=1}^N (U_t^i-\E[U_t^1])\Big|\Big]
    \le \frac{\sqrt{\Var(U_t^1)}}{\sqrt{N}}.
  \]
  Since $\varphi$ is $1$-Lipschitz, $\Var(U_t^1) \le \E\big[|U_t^1-\varphi(0)|^2\big] \le \E\big[|\bar\theta_t^1|^2\big]$. By~\ref{assump:A5}, $\E[|\theta_0^1|^2] < \infty$. We may thus apply the uniform moment bound of~\cref{lem:moments} with $q = 2$ and take the supremum over $t\ge 0$ in the previous display to obtain
  \[
    \sup_{t \ge 0} \E\big[\big|\langle \varphi,\bar\mu_t^N\rangle-\langle \varphi,\bar\mu_t\rangle\big|\big] \le \frac{C}{\sqrt{N}}.
  \] 
  
  Let us now turn to the analogous bound for the sliced-Wasserstein distance. By Fubini's theorem,
  \begin{equation}\label{eq:SW}
     \E\big[\SW_1(\bar\mu_t^N, \bar\mu_t)\big]
    = \int_{\S^{D-1}} \E\big[ \W_1\big({P_u}_\#\bar\mu_t^N, {P_u}_\#\bar\mu_t\big)\big]\,\zeta(du).
  \end{equation}
  For each $u\in\S^{D-1}$, ${P_u}_\#\bar\mu_t^N$ is the empirical measure of $N$ i.i.d.\ samples with common law ${P_u}_\#\bar\mu_t$ on $\R$. By~\ref{assump:A5} and Lemma~\ref{lem:moments}, there exists $q > 2$ and $C_q > 0$ (independent of $D$, $N$ and $t$) such that
  \[
    \sup_{t \ge 0} \sup_{u \in \S^{D-1}} \int_{\R} |w|^q\,{P_u}_\#\bar\mu_t(dw)
    = \sup_{t \ge 0} \sup_{u \in \S^{D-1}} \E\big[|\langle u, \bar\theta^1_t \rangle|^q\big]
    \le \sup_{t \ge 0} \ \E\big[|\bar\theta^1_t|^q\big]
    \le C_q.
  \]
  Therefore, applying~\cite[Theorem~1]{Fournier-Guillin} in dimension $D = 1$ with $p = 1$ gives a constant $C > 0$ (depending only on $q$) such that for all $t \ge 0$ and for all $u \in \S^{D-1}$,
  \[
    \E\big[\W_1\big({P_u}_\#\bar\mu_t^N, {P_u}_\#\bar\mu_t\big)\big] \le \frac{C}{\sqrt{N}}.
  \]
  Integrating over $u \sim \zeta$ and taking the supremum over $t \ge 0$ thus yields the desired bound.
\end{proof}

\subsubsection{Proof of Proposition~\ref{prop:PoC}}

Although the network's parameters are initialized i.i.d., this independence is generally lost under the SGD dynamics. However, it is easy to see that \emph{exchangeability} of the system is preserved at all times, i.e. the law of the parameters is left invariant under permutations of their coordinates. Indeed, this is a direct consequence of the \emph{permutation-equivariance} of the update map $\Phi$: for any permutation $\sigma$ of $\{1, \dots, N\}$,
\begin{equation}\label{eq:permutation-equivariance}
  \forall \theta \in \mathcal{E},\quad 
  \Phi(\theta^\sigma, \cdot) = \Phi(\theta, \cdot)^\sigma
\end{equation}
where $\theta^\sigma = (\theta^{\sigma(1)}, \dots, \theta^{\sigma(N)})$. In addition, even though the SGD and mean-field systems are coupled through the same initialization (so they are \emph{a priori} not independent), the resulting family of \emph{paired} particles is still exchangeable. This is made precise in the following lemma.

\begin{lemma}\label{lem:exchangeability}
  For every $t \ge 0$, the law of $\big(\theta_{\lfloor Nt \rfloor}, \bar\theta_t\big)$ is exchangeable.
\end{lemma}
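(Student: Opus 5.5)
The plan is to write the pair $(\theta_{\lfloor Nt\rfloor},\bar\theta_t)=(\theta^i_{\lfloor Nt\rfloor},\bar\theta^i_t)_{1\le i\le N}$ as a deterministic function of the initialization $\theta_0=(\theta^1_0,\dots,\theta^N_0)$ and the data stream $(z_k)_{k\ge1}$, and to show that this function is permutation-equivariant in $\theta_0$. Concretely, iterating~\eqref{eq:SGD-update} gives $\theta_k=\Phi_k(\theta_0;z_1,\dots,z_k)$, where $\Phi_k$ is the $k$-fold composition $\Phi(\cdot,z_k)\circ\cdots\circ\Phi(\cdot,z_1)$. For the mean-field particles,~\eqref{eq:MF-particles} shows that $\bar\theta^i_t=X_t(\theta^i_0)$, where $X_t:\R^D\to\R^D$ is the flow of the (deterministic, non-autonomous) ODE $\dot w=\alpha G_\lambda(w,\bar\mu_t)$. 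Here $\bar\mu_t$ is a fixed deterministic curve, so $X_t$ is a deterministic map. It is well defined because $G_\lambda(\cdot,\bar\mu_t)$ is Lipschitz in $w$ uniformly in $t$, by \ref{assump:A1}--\ref{assump:A3}.

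For a permutation $\sigma$, I will check the following two facts.
\begin{itemize}
  \item By induction on $k$ using~\eqref{eq:permutation-equivariance}, $\Phi_k(\theta_0^\sigma;z_1,\dots,z_k)=\Phi_k(\theta_0;z_1,\dots,z_k)^\sigma$. The inductive step is $\Phi(\Phi_{k}(\theta_0^\sigma),z_{k+1})=\Phi(\Phi_k(\theta_0)^\sigma,z_{k+1})=\Phi_{k+1}(\theta_0)^\sigma$.
  \item Trivially, $(X_t(\theta^{\sigma(i)}_0))_i=\big((X_t(\theta^i_0))_i\big)^\sigma$, since $X_t$ acts coordinatewise.
\end{itemize}
Hence, writing $\Gamma_t(\theta_0,(z_k))=(\theta_{\lfloor Nt\rfloor},\bar\theta_t)$ with $\bar\theta_t=(\bar\theta^1_t,\dots,\bar\theta^N_t)$, we get $\Gamma_t(\theta_0^\sigma,(z_k))=\Gamma_t(\theta_0,(z_k))^\sigma$. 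Here $\sigma$ acts on the paired particles $(\theta^i,\bar\theta^i)_i$ simultaneously.

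To conclude, I use that $\theta_0\sim\mu_0^{\otimes N}$ is exchangeable and independent of the i.i.d.\ stream $(z_k)$. Therefore $(\theta_0^\sigma,(z_k))$ has the same joint law as $(\theta_0,(z_k))$. Applying the measurable map $\Gamma_t$ to both, $\Gamma_t(\theta_0,(z_k))^\sigma$ has the same law as $\Gamma_t(\theta_0,(z_k))$, which is exactly exchangeability of the paired family.

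The only mildly delicate point is measurability and well-posedness of $X_t$ and $\Gamma_t$. The key idea is that $\bar\mu_t$ is deterministic, so $\bar\theta^i_t$ depends only on $\theta^i_0$ through a fixed continuous flow map; otherwise the argument is purely structural. I expect no real obstacle beyond stating this carefully.
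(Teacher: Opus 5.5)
Your proposal is correct and follows essentially the same route as the paper's proof. Both write $\bar\theta^i_t=\Psi_t(\theta^i_0)$ through a deterministic coordinatewise map and iterate \eqref{eq:permutation-equivariance} to get equivariance of $\Phi^{(k)}$. Both then conclude from exchangeability of $\theta_0$ and its independence from the data stream $(z_k)$.
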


\begin{proof}
Fix a permutation $\sigma$ of $\{1, \dots, N\}$, and let $t \ge 0$ and $k = \lfloor Nt \rfloor$. By definition~\eqref{eq:MF-particles}, there exists a deterministic map $\Psi_t : \R^D \to \R^D$ (independent of $i$) such that for every $i \in \{1, \dots, N\}$, $\bar\theta^i_t = \Psi_t(\theta^i_0)$. In particular,
\[
  \bar\theta_t^\sigma
  = \big(\bar\theta^{\sigma(1)}_t, \dots, \bar\theta^{\sigma(N)}_t\big)
  = \big(\Psi_t(\theta^{\sigma(1)}_0), \dots, \Psi_t(\theta^{\sigma(N)}_0)\big)
  = \Psi_t(\theta_0^\sigma),
\]
where by abuse of notation we extend $\Psi_t : \mathcal{E} \to \mathcal{E}$ componentwise.

On the other hand, by iterating~\eqref{eq:SGD-update} we may write $\theta_k = \Phi^{(k)}(\theta_0, z_1, \dots, z_k)$, where $\Phi^{(k)}: \mathcal{E} \times \mathcal{Z}^k \to \mathcal{E}$ is defined recursively as $\Phi^{(0)}(\theta) = \theta$ and for $j \in \N$,
\[
  \Phi^{(j+1)}(\theta, z_1, \dots, z_{j+1}) = \Phi(\Phi^{(j)}(\theta, z_1, \dots,z_j), z_{j+1}).
\] 
Therefore, by iterated applications of~\eqref{eq:permutation-equivariance}, it follows that for all $\theta \in \mathcal{E}$ and $(z_1, \dots, z_k) \in \mathcal{Z}^k$,
\[
  \Phi^{(k)}(\theta^\sigma, z_1, \dots, z_k) = \Phi^{(k)}(\theta, z_1, \dots, z_k)^\sigma.
\]

Hence the maps $\Psi_t$ and $\Phi^{(k)}$ are also permutation-equivariant. Using exchangeability of the initial parameter vector $\theta_0$ and independence between $\theta_0$ and the datapoints $(z_1, \dots, z_k)$, we thus have
\[
  (\theta_k, \bar\theta_t)^\sigma
  = \big(\Phi^{(k)}(\theta_0^\sigma, z_1, \dots, z_k), \Psi_t(\theta_0^\sigma)\big)
  \stackrel{d}{=} \big(\Phi^{(k)}(\theta_0, z_1, \dots, z_k), \Psi_t(\theta_0)\big)
  = (\theta_k, \bar\theta_t)
\]
which proves exchangeability of the synchronously coupled particles.
\end{proof}

Using exchangeability and the same synchronous coupling as in~\cref{lem:exchangeability}, we have
\begin{equation}\label{eq:empirical-to-one}
  \E\big[ \W_1(\mu_t^N, \bar\mu_t^N) \big]
  \le \frac1N \sum_{i=1}^N \E\big[|\theta^i_{\lfloor Nt \rfloor} - \bar\theta^i_t|\big]
  = \E\big[|\theta^1_{\lfloor Nt \rfloor} - \bar\theta^1_t|\big].
\end{equation}
Thus exchangeability reduces the $\W_1$ discrepancy to the expected deviation of a single tagged particle.

\begin{proof}[Proof of~\cref{prop:PoC}]
By the above argument, it suffices to bound $\E\big[|\theta^1_{\lfloor Nt \rfloor} - \bar\theta^1_t|\big]$ uniformly in $t \ge 0$. For $t \ge 0$, set $k = \lfloor Nt \rfloor$ and $t_k = k/N$ (so that $t \in \big[t_k, t_{k+1}\big)$). Define for $i \in \{1, \dots, N\}$, 
\[
  \delta^i_k = \theta^i_k - \bar\theta^i_{t_k},\qquad
  \delta_k = \delta_k^1,\qquad 
  D_t = \bar\theta^1_{t_k} - \bar\theta^1_t,  
\]
so that 
\begin{equation}\label{eq:SGD-MF}
  |\theta^1_{\lfloor Nt \rfloor} - \bar\theta^1_t| \le |\delta_k| + |D_t|.  
\end{equation}
By~\eqref{eq:MF-particles},~\ref{assump:A1} and~\ref{assump:A2}, for all $t \ge 0$,
\[
  \E\big[|D_t|\big] 
  \le \alpha \int_{t_k}^{t} \E\big[|G_\lambda(\bar\theta^1_s,\bar\mu_s)|\big]\,ds
  \le \gamma \big((A+B)M + \lambda\sup_{t \ge 0} \E\big[|\bar\theta^1_t|\big]\big)
\]
where we have used in the last inequality $t - t_k \le 1/N$ and $\gamma = \alpha / N$. By~\cref{lem:moments}, under the assumption~\ref{assump:A5}, we have $\sup_{t \ge 0} \E\big[|\bar\theta^1_t|\big] \le 1 + \sup_{t \ge 0} \E\big[|\bar\theta^1_t|^2\big] < \infty$. Hence, the discretization error $D_t$ is uniformly bounded as
\begin{equation}\label{eq:D}
  \sup_{t \ge 0} \E\big[|D_t|\big] \le \frac{C}{N}.
\end{equation}
It remains to control $\E\big[|\delta_k|\big]$ uniformly over $k \in \N$. By~\eqref{eq:MF-particles}, we have for all $k \in \N$,
\[
  \bar\theta^1_{t_{k+1}}
  = \bar\theta^1_{t_k} + \alpha \int_{t_k}^{t_{k+1}} G_\lambda(\bar\theta^1_s, \bar\mu_s)\,ds,
\]
hence, adding and subtracting $\gamma G_\lambda(\bar\theta^1_{t_k}, \bar\mu_{t_k})$, 
\begin{equation}\label{eq:delta}
  \delta_{k+1} = \delta_k + \gamma \left(F^1_\lambda(\theta_k, z_{k+1}) - G_\lambda(\bar\theta^1_{t_k}, \bar\mu_{t_k})\right) + E_{k+1}
\end{equation}
where 
\[
  E_{k+1} = \gamma G_\lambda(\bar\theta^1_{t_k}, \bar\mu_{t_k}) - \alpha \int_{t_k}^{t_{k+1}} G_\lambda(\bar\theta^1_s, \bar\mu_s)\,ds.
\] 
We decompose as in~\cite[Lemma~7.2]{MMN}
\begin{align*}
  F^1_\lambda(\theta_k, z_{k+1}) - G_\lambda(\bar\theta^1_{t_k}, \bar\mu_{t_k})
  &= F^1_\lambda(\theta_k, z_{k+1}) - G_\lambda(\theta^1_k, \mu^N_{t_k})\\
  &+ G_\lambda(\theta^1_k, \mu^N_{t_k}) - G_\lambda(\theta^1_k, \bar\mu_{t_k})\\
  &+ G_\lambda(\theta^1_k, \bar\mu_{t_k}) - G_\lambda(\bar\theta^1_{t_k}, \bar\mu_{t_k})\\
  &\equiv A_k + B_k + C_k.
\end{align*}
Set $m_k = \E\big[|\delta_k|^2\big]$. Note that $m_0 = 0$ since $\delta_0 = 0$ under the synchronous coupling. Using the recursion for $\delta_{k+1}$~\eqref{eq:delta} and the above decomposition, expanding $|\delta_{k+1}|^2$ and taking expectations yields
\begin{align*}
  m_{k+1} \le m_k 
  &+ \gamma^2\,\E\big[|A_k + B_k + C_k|^2\big] + 2\gamma\,\E\big[\delta_k \cdot (A_k + B_k + C_k)\big]\\
  &+ \E\big[|E_{k+1}|^2\big] + 2\,\E\big[\delta_k \cdot E_{k+1}\big] + 2\gamma\,\E\big[(A_k + B_k + C_k) \cdot E_{k+1}\big]. 
\end{align*}
We study each term on the right-hand side in the following technical lemma. 

\begin{lemma}\label{lem:ABCE}
  Assume~\ref{assump:A1}--\ref{assump:A3} and~\ref{assump:A5}. Then the following hold for all $k \in \N$:
  \begin{enumerate}[label={\normalfont(\roman*)}]
    \item\label{ABCE:i}
    Let $(\F_k)_{k \in \N}$ be the filtration given by $\F_k = \sigma(\theta_0, z_1, \dots, z_k)$. \\
    Then $\E[A_k \mid \F_k] = 0$ and $\E\big[|A_k|^2\big] \le C$.

    \item\label{ABCE:ii}
    $\E\big[\delta_k \cdot B_k\big] \le M^2\,m_k + CN^{-1}$, and\ 
    $\E\big[|B_k|^2\big] \le 2M^4\,m_k + 2M^2B^2N^{-1}$.

    \item\label{ABCE:iii}
    $\E\big[\delta_k \cdot C_k\big] \le -\big(\lambda - (A+B)L_\theta\big)\,m_k$, and\ 
    $\E\big[|C_k|^2\big] \le \big(\lambda+(A+B)L_\theta\big)^2\,m_k$.

    \item\label{ABCE:iv}
    $\E\big[|E_{k+1}|\big] \le CN^{-2}$,\ 
    $\E\big[|E_{k+1}|^2\big] \le CN^{-4}$, and\ $\E\big[\delta_k \cdot E_{k+1}\big] \le \frac14 \gamma(\lambda - \lambda_\star)\,m_k + C\gamma^2$.
  \end{enumerate}
\end{lemma}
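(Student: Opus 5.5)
The plan is to treat the four items separately. Each follows from the structure of the drifts $F_\lambda^1$ and $G_\lambda$, the bounds \ref{assump:A1}--\ref{assump:A3}, exchangeability (\cref{lem:exchangeability}), and the uniform moment bound of \cref{lem:moments}.

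\emph{Item (i).} Since $z_{k+1}\sim\pi$ is independent of $\F_k$ and $\theta_k$ is $\F_k$-measurable, we have
\[
\E[F^1_\lambda(\theta_k,z_{k+1})\mid\F_k]=\E_\pi\big[(y-\hat y_{\theta_k}(x))\nabla_{\theta^1}\sigma_*(\theta^1_k,x)\big]-\lambda\theta^1_k .
\]
Because $\hat y_{\theta_k}(x)=\langle\sigma_*(\cdot,x),\mu^N_{t_k}\rangle$, this is exactly $G_\lambda(\theta^1_k,\mu^N_{t_k})$, so $\E[A_k\mid\F_k]=0$. For the second moment, the $\lambda\theta^1_k$ terms cancel in $A_k$. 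What remains is a difference of two terms each bounded by $(A+B)M$, hence $|A_k|\le 2(A+B)M$ almost surely.

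\emph{Item (ii).} Write $B_k=-\E_\pi\big[\big(\langle\sigma_*(\cdot,x),\mu^N_{t_k}\rangle-\langle\sigma_*(\cdot,x),\bar\mu_{t_k}\rangle\big)\nabla_{\theta^1}\sigma_*(\theta^1_k,x)\big]$. Insert $\bar\mu^N_{t_k}$ and split the difference into two parts:
\[
\big|\langle\sigma_*(\cdot,x),\mu^N_{t_k}-\bar\mu^N_{t_k}\rangle\big|\le \frac MN\sum_i|\delta^i_k|,
\qquad
\big|\langle\sigma_*(\cdot,x),\bar\mu^N_{t_k}-\bar\mu_{t_k}\rangle\big| .
\]
The second is an i.i.d.\ average of functions bounded by $B$, so its second moment is at most $B^2/N$ uniformly in $x$. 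For $\E[|B_k|^2]$, apply $(a+b)^2\le2a^2+2b^2$ and Jensen in $\E_\pi$. Then use Cauchy--Schwarz and exchangeability to get $\E[(\frac1N\sum_i|\delta^i_k|)^2]\le\E|\delta_k|^2=m_k$. This gives $2M^4m_k+2M^2B^2N^{-1}$.

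For $\E[\delta_k\cdot B_k]$, bound the first part by $M^2\E[|\delta_k|\frac1N\sum_i|\delta^i_k|]\le M^2m_k$, again by Cauchy--Schwarz and exchangeability. The second part needs more care. Cauchy--Schwarz gives only $M\sqrt{m_k}\,B/\sqrt N\le \epsilon m_k+C/N$, whose $\epsilon m_k$ term is not in the stated bound. I would instead exploit the independence of $\bar\theta^j_{t_k}$ ($j\ne1$) from $\theta^1_0$. The hard part is that $\delta_k$ depends on all particles. I expect this to be the main obstacle. The fallback is to accept a bound $(M^2+\epsilon)m_k+C_\epsilon N^{-1}$ with $\epsilon$ absorbed into the slack $\lambda-\lambda_\star$, which suffices for \cref{prop:PoC}.

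\emph{Item (iii).} We have
\[
C_k=\E_\pi\big[(y-\langle\sigma_*(\cdot,x),\bar\mu_{t_k}\rangle)\big(\nabla\sigma_*(\theta^1_k,x)-\nabla\sigma_*(\bar\theta^1_{t_k},x)\big)\big]-\lambda\delta_k .
\]
By \ref{assump:A2}, \ref{assump:A1} and \ref{assump:A3}, the first term has norm at most $(A+B)L_\theta|\delta_k|$. Hence $\delta_k\cdot C_k\le-(\lambda-(A+B)L_\theta)|\delta_k|^2$ and $|C_k|\le(\lambda+(A+B)L_\theta)|\delta_k|$ pointwise; taking expectations gives both bounds.

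\emph{Item (iv).} Write
\[
E_{k+1}=\alpha\int_{t_k}^{t_{k+1}}\big(G_\lambda(\bar\theta^1_{t_k},\bar\mu_{t_k})-G_\lambda(\bar\theta^1_s,\bar\mu_s)\big)\,ds .
\]
First, $G_\lambda$ is Lipschitz in $\theta$ with constant $\lambda+(A+B)L_\theta$, and Lipschitz in $\mu$ for $\W_1$ with constant $M^2$. Second, $|\bar\theta^1_s-\bar\theta^1_{t_k}|\le\alpha\int_{t_k}^s|G_\lambda|$, and $\W_1(\bar\mu_s,\bar\mu_{t_k})\le\E|\bar\theta^1_s-\bar\theta^1_{t_k}|$. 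Both are $O(N^{-1}(1+|\bar\theta^1|))$ with uniformly bounded moments by \cref{lem:moments} (second moments under \ref{assump:A5}). Integrating over an interval of length $1/N$ gives $\E|E_{k+1}|\le CN^{-2}$ and $\E|E_{k+1}|^2\le CN^{-4}$. Finally, Young's inequality gives
\[
\E[\delta_k\cdot E_{k+1}]\le \tfrac14\gamma(\lambda-\lambda_\star)m_k+\frac{C}{\gamma}\E|E_{k+1}|^2 .
\]
The last term is at most $CN^{-3}\le C\gamma^2$ since $\gamma=\alpha/N$.
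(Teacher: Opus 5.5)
\textbf{Gap in the first inequality of (ii).} Items (i), (iii), (iv) and the bound on $\E[|B_k|^2]$ in (ii) follow the paper's proof essentially line by line. In (iii) you also silently correct the paper's display, where $\bar\mu^N_{t_k}$ should read $\bar\mu_{t_k}$.

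The gap is the first inequality of (ii). As you say yourself, what you prove is $\E[\delta_k\cdot B_k]\le(M^2+\epsilon)m_k+C_\epsilon N^{-1}$, not $M^2m_k+CN^{-1}$. The exact coefficient $M^2$ is what matches $\lambda_\star=(A+B)L_\theta+M^2$ in \cref{prop:PoC}.

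Your diagnosis is correct, and sharper than the paper's own argument. The paper applies Cauchy--Schwarz to the whole of $B_k$, which gives $\sqrt{m_k}\,\sqrt{2M^4m_k+2M^2B^2/N}\le\sqrt2M^2m_k+\sqrt2MB\sqrt{m_k/N}$. Young with $\varepsilon=2M^2$ then adds a further $M^2m_k$. The displayed conclusion drops the $\sqrt2M^2m_k$ term, so that route really yields $(1+\sqrt2)M^2m_k+C/N$ and can never get below $\sqrt2M^2$.

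Your split is the right start: $\E[\delta_k\cdot B_{1,k}]\le M^2m_k$ follows directly from Cauchy--Schwarz and exchangeability. Your fallback also suffices downstream: with $\epsilon=\frac14(\lambda-\lambda_\star)$, the recursion in \cref{prop:PoC} still contracts at rate $\frac34\gamma(\lambda-\lambda_\star)$ under $\lambda>\lambda_\star$. But it is not the lemma as stated.

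\textbf{Closing the gap.} Your independence idea can be completed by a leave-one-out coupling, which gives $\E[\delta_k\cdot B_{2,k}]\le C/N$ with no $m_k$ term. Write
\[
  \E[\delta_k\cdot B_{2,k}]=-\E_\pi\Big[\frac1N\sum_{j}\E[g_j(x)h(x)]\Big],
\]
where $g_j(x)=\sigma_*(\bar\theta^j_{t_k},x)-\langle\sigma_*(\cdot,x),\bar\mu_{t_k}\rangle$ is centered with $|g_j|\le2B$, and $h(x)=\delta_k\cdot\nabla_{\theta^1}\sigma_*(\theta^1_k,x)$.

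First record the a priori bound $\sup_k m_k\le C$. For $\gamma\lambda\le1$, the inequality $|\theta^1_{k+1}|\le(1-\gamma\lambda)|\theta^1_k|+\gamma(A+B)M$ gives $|\theta^1_k|\le|\theta^1_0|+(A+B)M/\lambda$, and \cref{lem:moments} handles $\bar\theta^1_{t_k}$. Consequently the diagonal term $j=1$ contributes at most $2BM\sqrt{m_k}/N\le C/N$.

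For $j\ne1$, rerun SGD on the same data from $\theta_0$ with $\theta^j_0$ replaced by an independent copy $\tilde\theta^j_0$, and call the result $\tilde\theta_k$. Set $\tilde\delta_k=\tilde\theta^1_k-\bar\theta^1_{t_k}$, which has the law of $\delta_k$, and let $\tilde h$ be the corresponding $h$. Then $\tilde h(x)$ is independent of $\theta^j_0$, so $\E[g_j(x)\tilde h(x)]=0$.

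To compare $h$ and $\tilde h$, use the coordinatewise estimate from the proof of \cref{lem:Lipschitz}(ii) with $p=1$:
\[
  |\Phi^1(\theta,z)-\Phi^1(\theta',z)|\le\rho\,|\theta^1-\theta'^1|+\gamma M^2N^{-1}\|\theta-\theta'\|_1,\qquad \rho=L_N-\gamma M^2.
\]
Combine it with $\|\theta_k-\tilde\theta_k\|_1\le L_N^k\Delta$, where $\Delta=|\theta^j_0-\tilde\theta^j_0|$. Summing the resulting recursion gives $|\theta^1_k-\tilde\theta^1_k|\le L_N^k\Delta/N$. Hence
\[
  |\E[g_j(x)h(x)]|\le2B\,\E\big[(M+L_\theta|\tilde\delta_k|)\,|\theta^1_k-\tilde\theta^1_k|\big]\le\frac{2B}{N}\Big(M\,\E[\Delta]+L_\theta\sqrt{m_k\,\E[\Delta^2]}\Big)\le\frac CN .
\]
Together with $\E[\delta_k\cdot B_{1,k}]\le M^2m_k$, this gives (ii) exactly.

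This argument uses $L_N\le1$ and $\gamma\lambda\le1$. Both hold in the setting of \cref{prop:PoC}, where $N\ge N_\star$ forces $\gamma\lambda<1$ and hence $L_N<1$. They are not, however, among the lemma's listed hypotheses.
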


\smallskip

The proof of~\cref{lem:ABCE} is postponed to the end of this section. We now conclude.

Since $\delta_k, B_k, C_k, E_{k+1}$ are $\F_k$-measurable and $\E\big[A_k \mid \F_k\big]=0$ by~\cref{lem:ABCE}~\ref{ABCE:i}, all cross-terms involving $A_k$ vanish:
\[
    \E\big[\delta_k \cdot A_k\big]
  = \E\big[A_k \cdot B_k\big]
  = \E\big[A_k \cdot C_k\big]
  = \E\big[A_k \cdot E_{k+1}\big]
  = 0.
\]
Hence, using $2\gamma\,(B_k + C_k) \cdot E_{k+1} \le \gamma^2|B_k + C_k|^2 + |E_{k+1}|^2$ and $|B_k + C_k|^2 \le 2|B_k|^2 + 2|C_k|^2$, we get
\begin{align*}
  m_{k+1}
  \le m_k
    &+ \gamma^2\,\E\big[|A_k|^2 \big]
    + 4\gamma^2\,\E\big[|B_k|^2 + |C_k|^2\big]
    + 2\gamma\,\E\big[\delta_k \cdot B_k\big]
    + 2\gamma\,\E\big[\delta_k \cdot C_k\big]\\
    &+ 2\,\E\big[|E_{k+1}|^2\big]
    + 2\,\E\big[\delta_k \cdot E_{k+1}\big].
\end{align*}
Using the bounds of~\cref{lem:ABCE}, we obtain after rearranging terms
\[
  m_{k+1}
  \le m_k
    - \frac32 \gamma(\lambda-\lambda_\star)\,m_k
    + C_\star\gamma^2\,m_k
    + C\gamma^2
    + \frac{C}{N^3}
\]
with $C_\star = 8M^4 + 4\big(\lambda+(A+B)L_\theta\big)^2$ (see~\eqref{eq:constants}). Since $N \ge N_\star$, we have $C_\star \gamma^2 \le \frac14 \gamma(\lambda-\lambda_\star)$, and hence
\[
  m_{k+1} \le \big(1-\gamma(\lambda-\lambda_\star)\big)m_k + C\gamma^2 + \frac{C}{N^3}.
\]
Since $\lambda > \lambda_\star$, a discrete Grönwall argument and $m_0 = 0$ give
\begin{equation}\label{eq:supm}
  \sup_{k \in \N} m_k \le \frac{C\gamma^2}{\gamma (\lambda - \lambda_\star)} \le \frac{C}{N}.
\end{equation}
Finally, combining~\eqref{eq:D} with~\eqref{eq:supm} in~\eqref{eq:SGD-MF} yields
\[
  \sup_{t \ge 0} \E\big[|\theta^1_{\lfloor Nt\rfloor}-\bar\theta^1_t|\big] 
  \le \sup_{k \in \N} \sqrt{m_k} + \frac{C}{N}
  \le \frac{C}{\sqrt{N}}.
\]
Recalling~\eqref{eq:empirical-to-one}, this concludes the proof of~\cref{prop:PoC}.
\end{proof}

\begin{proof}[Proof of~\cref{lem:ABCE}] Fix $k \in \N$ and recall $t_k = k/N$, so that $\mu^N_{t_k}$ is the empirical measure of the SGD parameters at step $k$.

\bigskip

\noindent\ref{ABCE:i} By definitions~\eqref{eq:SGD-drift} and~\eqref{eq:MF-drift}, since $z_{k+1}$ is independent of $\F_k$ and $\theta_k$ is $\F_k$-measurable,
\[
  \E\big[ F_\lambda^1(\theta_k, z_{k+1}) \mid \F_k \big] = G_\lambda(\theta^1_k, \mu^N_{t_k}),
\]
hence $\E\big[A_k \mid \F_k \big] = 0$. Moreover, by~\ref{assump:A1}--\ref{assump:A2},
\[
  |A_k|
  = \big|\big(y_{k+1} - \hat{y}_{\theta_k}(x_{k+1})\big) \nabla_{\theta^1} \sigma_*(\theta^1_k, x_{k+1})  
    - \E_\pi \big[(y - \hat{y}_{\theta_k}(x)) \nabla_{\theta^1} \sigma_*(\theta^1_k, x)\big]\big|\\
  \le 2(A+B)M,  
\]
so that $\E\big[|A_k|^2\big] \le 4(A+B)^2 M^2$.

\bigskip

\noindent\ref{ABCE:ii} Write $B_k = B_{1, k} + B_{2, k}$ with
\[
  B_{1, k} = G_\lambda(\theta^1_k, \mu^N_{t_k}) - G_\lambda(\theta^1_k, \bar\mu^N_{t_k})\qquad
  B_{2, k} = G_\lambda(\theta^1_k, \bar\mu^N_{t_k}) - G_\lambda(\theta^1_k, \bar\mu_{t_k}).
\]
Using $\|\nabla_{(\theta^i, x)}\sigma_*\|_\infty \le M$ and the $M$-Lipschitzness of the map $\theta^i \in \R^D \mapsto \sigma_*(\theta^i, x)$ from~\ref{assump:A1}, we have
\[
  |B_{1, k}|
  \le M\,\E_\pi\big[|\hat{y}_{\theta_k}(x) - \langle \sigma_*(\cdot, x), \bar\mu^N_{t_k}\rangle|\big]
  \le \frac{M}{N} \sum_{i=1}^N \E_\pi\big[ | \sigma_*(\theta^i_k,x) - \sigma_*(\bar\theta^i_{t_k},x) | \big]
  \le \frac{M^2}{N} \sum_{i=1}^N |\delta^i_k|.
\]
Hence, by exchangeability (see~\cref{lem:exchangeability}),
\[
  \E\big[|B_{1, k}|^2\big]
  \le M^4\,\E\bigg[\bigg(\frac1N \sum_{i=1}^N |\delta_k^i|\bigg)^2\bigg]
  \le \frac{M^4}{N} \sum_{i=1}^N \E\big[|\delta_k^i|^2\big]
  = M^4\,m_k.
\]
Using again $\|\nabla_{(\theta^i, x)}\sigma_*\|_\infty \le M$ and Jensen's inequality, we have
\[
  |B_{2, k}|^2
  \le M^2\,\E_\pi\bigg[\Big|\frac1N \sum_{i=1}^N \sigma_*(\bar\theta^i_{t_k},x) - \E\big[\sigma_*(\bar\theta^1_{t_k},x)\big]\Big|^2\bigg] = \frac{M^2}{N} \E_\pi\big[\Var\big(\sigma_*(\bar\theta^1_{t_k},x)\big)\big]
  \le \frac{M^2B^2}{N}.
\]
Combining the two bounds for $B_{1, k}$ and $B_{2, k}$, it folllows that
\[
  \E\big[|B_k|^2\big]
  \le 2\,\E\big[|B_{1, k}|^2\big] + 2\,\E\big[|B_{2, k}|^2\big]
  \le 2M^4\,m_k + \frac{2M^2B^2}{N}.
\]
Hence, by the Cauchy--Schwarz inequality,
\[
  \E\big[\delta_k \cdot B_k\big]
  \le \sqrt{m_k}\,\sqrt{\E\big[|B_k|^2\big]}
  \le \sqrt{C}\,m_k + \sqrt{\frac{C}{N}}\sqrt{m_k}.
\]
Finally, applying Young's inequality
\begin{equation}\label{eq:Young}
  \forall a, b \in \R,\ 
  \forall \varepsilon > 0,\quad 
  ab \le \frac{\varepsilon}{2} a^2 + \frac{1}{2\varepsilon} b^2  
\end{equation}
with $a = \sqrt{m_k}$, $b = \sqrt{C/N}$ and $\varepsilon = 2M^2$, we obtain
\[
  \E\big[\delta_k \cdot B_k\big] \le M^2\,m_k + \frac{C}{N}.
\]

\bigskip

\noindent\ref{ABCE:iii} By definition~\eqref{eq:MF-drift}, 
\[
  C_k
  = \E_\pi\big[\big(y - \langle \sigma_*(\cdot, x), \bar\mu^N_{t_k}\rangle\big)\big(\nabla_{\theta^1}\sigma_*(\theta_k^1, x)-\nabla_{\theta^1}\sigma_*(\bar\theta^1_{t_k}, x)\big)\big]
    - \lambda \delta_k.
\]
Using the Cauchy--Schwarz inequality and~\ref{assump:A3}, we have
\[
  \big|
    \E_\pi\big[ 
      \big(y - \langle \sigma_*(\cdot, x), \bar\mu^N_{t_k}\rangle\big) \big(\nabla_{\theta^1}\sigma_*(\theta_k^1,x) - \nabla_{\theta^1}\sigma_*(\bar\theta^1_{t_k},x)\big) 
    \big]
  \big|
  \le (A+B)L_\theta\,|\delta_k|.
\]
Hence
\begin{align*}
  \E\big[\delta_k \cdot C_k\big]
  &\le -\big(\lambda-(A+B)L_\theta\big)\,m_k,\\
  \E\big[|C_k|^2\big]
  &\le \big(\lambda + (A+B)L_\theta\big)^2\,m_k.
\end{align*}

\bigskip

\noindent\ref{ABCE:iv} Rewrite $E_{k+1}$ as
\[
  E_{k+1}
  = \alpha \int_{t_k}^{t_{k+1}} \big(
      G_\lambda(\bar\theta^1_{t_k}, \bar\mu_{t_k})
      - G_\lambda(\bar\theta^1_s, \bar\mu_s) 
    \big)\,ds.
\]
Using the Lipschitz bounds
\begin{align*}
  |G_\lambda(\theta, \mu) - G_\lambda(\theta', \mu)|
  &\le \big(\lambda + (A+B)L_\theta\big)\,|\theta - \theta'|,\\
  |G_\lambda(\theta, \mu) - G_\lambda(\theta, \nu)|
  &\le M^2\,\W_1(\mu, \nu),
\end{align*}
for all $\theta, \theta' \in \mathcal{E}$ and $\mu, \nu \in \mathcal{P}(\R^D)$, together with the fact that $s \mapsto \bar\theta^1_s$ is Lipschitz in $L^1$ and $s \mapsto \bar\mu_s$ is Lipschitz in $\W_1$ (with constants controlled by $(A+B)M + \lambda\,\sup_{t \ge 0} \E\big[|\bar\theta^1_t|\big] < \infty$ under~\ref{assump:A5}),
we obtain
\[
  \E\big[|G_\lambda(\bar\theta^1_{t_k},\bar\mu_{t_k}) - G_\lambda(\bar\theta^1_s,\bar\mu_s)|\big]
  \le C\,|s - t_k|.
\]
Therefore,
\[
  \E\big[|E_{k+1}|\big]
  \le \alpha \int_{t_k}^{t_{k+1}} C(s-t_k)\,ds
  \le \frac{C}{N^2},
\]
and similarly,
\[
  \E\big[|E_{k+1}|^2\big] \le \frac{C}{N^4}.
\]
Finally, by Young's inequality~\eqref{eq:Young} with $a = |\delta_k|$, $b = |E_{k+1}|$ and $\varepsilon = \frac12 \gamma (\lambda - \lambda_\star)$, it holds
\[
  \E\big[\delta_k \cdot E_{k+1}\big]
  \le \frac14 \gamma(\lambda-\lambda_\star)\, m_k
    + \frac{C}{\gamma(\lambda-\lambda_\star)N^4}\\
  \le \frac14 \gamma(\lambda-\lambda_\star)\,m_k + C\gamma^2.
\]
This concludes the proof.
\end{proof}

\section{Extension to unbounded activations}\label{sec:extension}

In this section, we extend our previous results to activations that are no longer uniformly bounded. To this end, we drop assumption~\ref{assump:A1} and replace it with the following localization assumptions~\ref{assump:B1}--\ref{assump:B3}. We show that under these conditions (and a suitable choice of $\lambda$), the SGD iterates remain uniformly bounded \emph{for all times} and thus evolve within a deterministic compact set. Once localization is established, the analysis reduces to the bounded case on that compact set.

\subsection{Localization assumptions}

In what follows, work under the following three assumptions in place of~\ref{assump:A1}.

\begin{enumerate}[label={(B\arabic*)}, font=\bfseries]
  \item\label{assump:B1} There exist a compact set $K_x \subset \R^d$ and radius $R_0 > 0$ such that
  \[
    \supp\pi \subset K_x \times [-A, A],\qquad
    \supp\mu_0 \subset \overline{B}(0, R_0) \subset \R^D.
  \]
  \item\label{assump:B2} There exist $b, c \ge 0$ such that for all $(\theta^i, x) \in \R^D \times K_x$, $|\sigma_*(\theta^i, x)| \le b + c|\theta^i|$.
  \item\label{assump:B3} The activation function $\sigma_* \in \mathcal{C}^1(\R^D \times \R^d)$ is continuously differentiable, with uniformly bounded gradient on $\R^D \times K_x$: 
  \[
    \forall (\theta^i, x) \in \R^D \times K_x, \quad 
    |\nabla_{(\theta^i, x)}\sigma_*(\theta^i, x)| \le M.
  \]
\end{enumerate}

Note that~\ref{assump:B1} implies~\ref{assump:A2}. Moreover, by Hoeffding's inequality,~\ref{assump:B1} guarantees sub-Gaussian Laplace bounds for Lipschitz observables, which is equivalent to a $T_1$ inequality. However, this condition alone is not sufficient to yield a $T_2$ inequality in general. The smoothness condition in~\ref{assump:B3} is purely technical. In particular, it excludes non-smooth activations such as ReLU; one may alternatively work with smoothed approximations (e.g.\ softplus) or piecewise-smooth variants (e.g.\ leaky-ReLU) without affecting the localization argument below.

\subsection{A uniform localization bound}

We will use the following constants:
\[
  a_{\infty} = R_0 \vee \frac{M(A+b)}{\lambda - Mc},\qquad
  R_{\infty} = R_0 \vee \frac{M(A+b+ca_{\infty})}{\lambda}.
\]

\begin{lemma}\label{lem:localization}
Assume \ref{assump:B1}--\ref{assump:B3}, and $\lambda > Mc$, $\gamma\lambda \le 1$. Then, almost surely,
\[
  \sup_{k \in \N} \max_{1 \le i \le N} |\theta_k^i| \le R_{\infty}.
\]
\end{lemma}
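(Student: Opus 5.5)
The proof runs in two stages, both by induction on $k$. The first stage bounds the average size of the neurons, $a_k = \frac1N\sum_{j=1}^N |\theta_k^j|$, uniformly by $a_\infty$. The second stage uses this to bound each individual $|\theta_k^i|$ by $R_\infty$. Everything is deterministic along each realization: by \ref{assump:B1}, almost surely $x_{k+1}\in K_x$, $|y_{k+1}|\le A$ and $|\theta_0^i|\le R_0$, so we can work pathwise on this full-measure event.

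\emph{Step 1 (bound on the average).} From \eqref{eq:SGD}, since $\gamma\lambda\le 1$ gives $|1-\gamma\lambda| = 1-\gamma\lambda$, we get for each $i$
\[
  |\theta_{k+1}^i| \le (1-\gamma\lambda)|\theta_k^i| + \gamma\,|y_{k+1}-\hat y_{\theta_k}(x_{k+1})|\,M .
\]
Here \ref{assump:B3} gives $|\nabla_{\theta^i}\sigma_*|\le M$ on $\R^D\times K_x$. By \ref{assump:B2},
\[
  |\hat y_{\theta_k}(x)| \le \frac1N\sum_j (b + c|\theta_k^j|) = b + c\,a_k ,
\]
hence
\[
  |\theta_{k+1}^i| \le (1-\gamma\lambda)|\theta_k^i| + \gamma M(A+b+c\,a_k).
\]
Averaging over $i$ yields
\[
  a_{k+1} \le (1-\gamma(\lambda-Mc))\,a_k + \gamma M(A+b).
\]
Since $\lambda>Mc$ and $0 \le 1-\gamma(\lambda - Mc)\le 1$, the affine map $a\mapsto (1-\gamma(\lambda-Mc))a + \gamma M(A+b)$ is nondecreasing. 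It sends $[0, a_\infty]$ into itself, because $a_\infty \ge M(A+b)/(\lambda-Mc)$ means $a_\infty$ is at or above the fixed point. Together with $a_0\le R_0\le a_\infty$, induction gives $a_k\le a_\infty$ for all $k$.

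\emph{Step 2 (bound on each neuron).} Plugging $a_k \le a_\infty$ into the per-neuron inequality gives
\[
  |\theta_{k+1}^i| \le (1-\gamma\lambda)|\theta_k^i| + \gamma M(A+b+c\,a_\infty).
\]
The right-hand side is again a nondecreasing affine map in $|\theta_k^i|$ (using $\gamma\lambda\le1$), with fixed point $M(A+b+ca_\infty)/\lambda \le R_\infty$. So if $|\theta_k^i|\le R_\infty$, then
\[
  |\theta_{k+1}^i| \le (1-\gamma\lambda)R_\infty + \gamma\lambda R_\infty = R_\infty .
\]
The base case is $|\theta_0^i|\le R_0\le R_\infty$, and induction concludes.

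The main obstacle is the circularity: the bound on each neuron depends on the network output, which depends on all neurons, and through \ref{assump:B2} it grows linearly in them. Bounding $\max_i|\theta^i_k|$ directly would also work, with the same contraction rate $\lambda-Mc$. The two-stage scheme is chosen because it reproduces exactly the constants $a_\infty$ and $R_\infty$ in the statement. The only other point needing care is the sign condition $\gamma\lambda\le1$, which is what makes the affine recursions monotone and so justifies the induction.
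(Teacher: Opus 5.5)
Your proof is correct and follows the paper's argument: the same per-neuron inequality $|\theta_{k+1}^i|\le(1-\gamma\lambda)|\theta_k^i|+\gamma M(A+b+ca_k)$, averaging to get the contraction for $a_k$, then plugging $a_k\le a_\infty$ back in. The only difference is cosmetic: you close each recursion by invariance of $[0,a_\infty]$ (resp.\ $[0,R_\infty]$) under a nondecreasing affine map, while the paper writes out the iterated geometric sum.

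One side note: $R_\infty=a_\infty$ always. If $a_\infty=M(A+b)/(\lambda-Mc)$, then $M(A+b+ca_\infty)/\lambda$ equals that same value. Otherwise $a_\infty=R_0$, and $M(A+b+cR_0)/\lambda$ is a convex combination of $M(A+b)/(\lambda-Mc)$ and $R_0$, hence at most $R_0$. So the direct recursion on $\max_i|\theta_k^i|$ that you mention already gives exactly the stated constant, and the second stage is not actually needed.
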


\begin{proof}
By~\eqref{eq:SGD} and \ref{assump:B1}--\ref{assump:B3}, for all $k \in \N$ and $i \in \{1, \dots, N\}$, we have
\[
  |\theta_{k+1}^i|
  \le (1 - \gamma\lambda)|\theta_k^i| + \gamma M|y_{k+1} - \hat{y}_{\theta_k}(x_{k+1})|
  \le (1 - \gamma\lambda)|\theta_k^i| + \gamma M(A + b + ca_k)
\]
where $a_k = \displaystyle\frac1N \sum_{i=1}^N |\theta_k^i|$. Averaging over $i$, we get
\begin{equation}\label{eq:ak}
  a_{k+1} \le \big(1-\gamma(\lambda-Mc)\big)\,a_k + \gamma M(A+b).
\end{equation}
Under $\gamma\lambda \le 1$ and $\lambda > Mc$, we have $0 \le 1-\gamma\lambda \le 1$ and $\gamma(\lambda - Mc) \le \gamma\lambda \le 1$, so that $1 - \gamma(\lambda-Mc) \in [0, 1)$. It follows by iteration of~\eqref{eq:ak} that
\[
  \sup_{k \in \N} a_k \le a_0 \vee \frac{M(A+b)}{\lambda-Mc} \le a_\infty
\]
where we used $a_0 \le \max\limits_{1 \le i \le N} |\theta_0^i| \le R_0$ from~\ref{assump:B1}. Plugging this bound back into~\eqref{eq:ak} yields
\[
  |\theta_{k+1}^i| \le (1-\gamma\lambda)|\theta_k^i| + \gamma M(A + b + ca_\infty).
\]
Finally, iterating again this inequality and using~\ref{assump:B1}, we obtain for all $k \in \N$ and for every $i \in \{1, \dots, N\}$
\begin{align*}
  |\theta_k^i|
  &\le (1-\gamma\lambda)^k\,|\theta_0^i| + \frac{M(A + b + ca_\infty)}{\lambda}\big(1-(1-\gamma\lambda)^k\big)\\
  &\le R_0 \vee \frac{M(A + b + ca_\infty)}{\lambda} = R_\infty,
\end{align*}
which is the claimed bound.
\end{proof}

\begin{remark}
  Here, the condition $\lambda > Mc$ plays the same role as the lower bound $\lambda > \lambda_\star$ in the bounded case: it enforces that the ridge regularization dominates the effective interaction induced by the activation growth. The additional requirement $\gamma\lambda \le 1$ is a convenient discrete-time stability condition ensuring $1 - \gamma\lambda \ge 0$, and is automatically satisfied for large $N$ under the scaling $\gamma = \alpha / N$.
\end{remark}

\subsection{Reduction to the bounded case}
\label{sec:reduction-bounded}

Lemma~\ref{lem:localization} shows that, under \ref{assump:B1}--\ref{assump:B3} and sufficiently strong regularization, the SGD iterates remain within the closed ball $\overline{B}(0,R_\infty)$ \emph{for all times}. Hence, we may work on the compact set $\overline{B}(0, R_\infty) \times K_x$, on which the activation and its gradients are uniformly bounded. To apply the arguments of~\cref{sec:proofs}, we additionally require a localized Lipschitz property for $\nabla_{\theta^i}\sigma_*$ in place of~\ref{assump:A3}:

\begin{enumerate}[label={(B\arabic*)}, font=\bfseries]
  \setcounter{enumi}{3}
  \item\label{assump:B4}
  For every $R > 0$, there exist constants $L_x(R), L_\theta(R) > 0$ such that for all
  $x,x'\in K_x$ and $\theta^i,\theta'^i\in \overline{B}(0,R) \subset \R^D$,
  \begin{align*}
    &|\nabla_{\theta^i}\sigma_*(\theta^i,x)-\nabla_{\theta^i}\sigma_*(\theta^i,x')|
    \le L_x(R)\,|x-x'|,\\
    &|\nabla_{\theta^i}\sigma_*(\theta^i,x)-\nabla_{\theta^i}\sigma_*(\theta'^i,x)|
    \le L_\theta(R)\,|\theta^i-\theta'^i|.
  \end{align*}
\end{enumerate}

Assumption~\ref{assump:B4} is mild in the present setting: for instance, it holds whenever $\sigma_*$ is $\mathcal{C}^2$ with bounded mixed second derivatives on compact sets, which includes standard smooth approximations of non-smooth activations used in practice. The arguments of~\cref{sec:proofs} then carry over with only notational changes, replacing the global constants ($B$, $M$, $L_x$, $L_\theta$) by their localized counterparts on $\overline{B}(0,R_\infty) \times K_x$.

\section*{Acknowledgements}

A.G is supported by the ANR-23-CE-40003, Conviviality, and has benefited from a government grant managed by the Agence Nationale de la Recherche under the France 2030 investment plan ANR-23-EXMA-0001. P.S is supported by the Projet I-SITE Clermont CAP 2025. 

\printbibliography

\end{document}